\documentclass[10pt,journal]{IEEEtran} 
\usepackage[ruled,linesnumbered]{algorithm2e}
\usepackage{amsthm}
\usepackage{amsmath}
\usepackage{slashbox}
\usepackage{graphicx}
\usepackage{hhline}
\usepackage{epsfig,graphics,psfrag,amsmath,amssymb}
\usepackage{cite}
\usepackage{float}
\usepackage{tocvsec2}
\usepackage{color}
\usepackage{verbatim}
\usepackage{mathtools}
\usepackage{hhline}
\usepackage{enumerate}
\usepackage{arydshln}
\usepackage{bbm}
\usepackage{stackengine}
\usepackage{mathrsfs}
\usepackage{enumitem}
\usepackage{caption}
\usepackage{stfloats}
\usepackage{amssymb}
\usepackage{booktabs}
\usepackage{makecell}

\usepackage{hhline}
\usepackage{graphicx}     
\usepackage{caption}      
\usepackage{subcaption}   

\usepackage[export]{adjustbox} 
\usepackage{hyperref}       
\usepackage{setspace}

\usepackage[symbol]{footmisc}

\newtheorem{thm}{Theorem}
\newtheorem{lem}{Lemma}

\newtheorem{rem}{Remark}

\newtheorem{assump}{Assumption}

\newtheorem{problem}{Problem}

\makeatletter

\newcommand{\Rmnum}[1]{\expandafter\@slowromancap\romannumeral #1@}
\makeatother

\makeatletter
\newcommand*\bigcdot{\mathpalette\bigcdot@{.5}}
\newcommand*\bigcdot@[2]{\mathbin{\vcenter{\hbox{\scalebox{#2}{$\m@th#1\bullet$}}}}}
\makeatother

\def\@#1{\pmb{#1}}
\def\b#1{\mathbb{#1}}

\def\s#1{\mathsf{#1}}

\def\ca#1{\mathcal{#1}}

\def\v#1{\boldsymbol{#1}}

\newcommand\aleq{\mathrel{\stackrel{\makebox[0pt]{\mbox{\normalfont\tiny (a)}}}{\leq}}}
\newcommand\bleq{\mathrel{\stackrel{\makebox[0pt]{\mbox{\normalfont\tiny (b)}}}{\leq}}}

\newcommand\aeq{\mathrel{\stackrel{\makebox[0pt]{\mbox{\normalfont\tiny (a)}}}{=}}}
\newcommand\beq{\mathrel{\stackrel{\makebox[0pt]{\mbox{\normalfont\tiny (b)}}}{=}}}

\DeclareMathOperator*{\argmin}{arg\,min}

\allowdisplaybreaks

\begin{document}
\title{{Decentralized Federated Learning With Energy Harvesting Devices}}
\author{Kai Zhang$^{\dagger}$, Xuanyu Cao$^{\ddagger}$, \emph{Senior Member}, \emph{IEEE}, and Khaled B. Letaief$^{\dagger}$, \emph{Fellow}, \emph{IEEE}\\
	$^{\dagger}$Department of Electronic and Computer Engineering, The Hong Kong University of Science and Technology
	\\$^{\ddagger}$School of Electrical Engineering and Computer Science, Washington State University
	\\Email: kzhangbn@connect.ust.hk, xuanyu.cao@wsu.edu, eekhaled@ust.hk
	}

\maketitle

\begin{abstract}

Decentralized federated learning (DFL) enables edge devices to collaboratively train models through local training and fully decentralized device-to-device (D2D) model exchanges.
However, these energy-intensive operations often rapidly deplete limited device batteries, reducing their operational lifetime and degrading the learning performance.
To address this limitation, we apply energy harvesting technique to DFL systems, allowing edge devices to extract ambient energy and operate sustainably.
We first derive the convergence bound for wireless DFL with energy harvesting, showing that the convergence is influenced by partial device participation and transmission packet drops, both of which further depend on the available energy supply.
To accelerate convergence, we formulate a joint device scheduling and power control problem and model it as a multi-agent Markov decision process (MDP).
Traditional MDP algorithms (e.g., value or policy iteration) require a centralized coordinator with access to all device states and exhibit exponential complexity in the number of devices, making them impractical for large-scale decentralized networks.
To overcome these challenges, we propose a fully decentralized policy iteration algorithm that leverages only local state information from two-hop neighboring devices, thereby substantially reducing both communication overhead and computational complexity.
We further provide a theoretical analysis showing that the proposed decentralized algorithm achieves asymptotic optimality.
Finally, comprehensive numerical experiments on real-world datasets are conducted to validate the theoretical results and corroborate the effectiveness of the proposed algorithm.

\end{abstract}

\begin{IEEEkeywords}

Decentralized federated learning, energy harvesting, Markov decision process, policy iteration.

\end{IEEEkeywords}

\section{Introduction}

Machine learning (ML) is a fundamental technology underpinning numerous artificial intelligence (AI) applications, including autonomous vehicles \cite{kiran2021deep} and smart cities \cite{mohammadi2018enabling}.
As data generated by edge devices (e.g., mobile phones, wearables, sensors) continues to proliferate, there is an increasing interest in applying ML at the network edge \cite{letaief2019roadmap,letaief2021edge}.
However, traditional centralized ML frameworks require aggregating raw data on a remote server, which results in substantial network traffic and introduces significant privacy and security concerns.
As a remedy, federated learning (FL) has recently been introduced, enabling decentralized model training across edge devices without centralizing raw data at a single site \cite{mcmahan2017communication}.
In FL, edge devices perform local training on private data and share only model parameters or gradients.
This architecture not only preserves data privacy but also reduces both bandwidth usage and energy consumption by mitigating the need for massive data transmissions \cite{shao2023survey,yang2020energy}.

Nevertheless, practical FL systems still suffer from substantial communication overhead.
The frequent parameter exchanges in FL often saturate limited network bandwidth and increase transmission delays, ultimately impairing learning efficiency.
To address these communication bottlenecks, extensive research has proposed diverse strategies aimed at improving FL’s communication efficiency.
Some studies focused on reducing communication rounds to limit unnecessary information exchange \cite{wang2019adaptive,george2020distributed}, while others sought to compress transmitted information by employing quantization \cite{bouzinis2023wireless,liu2022hierarchical} or sparsification \cite{lin2017deep} techniques.
A third line of work explicitly considered practical wireless communication aspects (e.g., noise, fading, interference) and optimized radio resource allocation to improve learning efficiency \cite{9210812,wan2021convergence}.

Most existing FL systems adopt a server-devices architecture, in which a central server communicates with all edge devices and coordinates the training process.
However, this architecture poses significant scalability and reliability issues, especially in large-scale deployments.
The central server can create network congestion and lead to a single point of failure.
Moreover, many practical multi-device networks lack a central entity capable of coordinating all devices.
For example, large-scale sensor networks may lack fusion centers, and sensors can only interact with other nearby neighbors.
Consequently, decentralized federated learning (DFL) has gained attention as an appealing alternative \cite{beltran2023decentralized,qu2021decentralized}.
In DFL, edge devices train local models and exchange model updates directly through device-to-device (D2D) links, eliminating the need for a central aggregator.
This approach mitigates single points of failure, relieves communication bottlenecks caused by a central aggregator, and reduces long-range communication costs.

Despite these advantages, DFL introduces new communication challenges in the absence of a central server orchestrating model exchanges. 
Without a coordinating entity, edge devices must self-schedule their transmissions, which can lead to severe interference, network congestion, and inefficient utilization of limited wireless resources.
To address these challenges, several strategies have been developed.
For instance, in \cite{xing2021federated}, the authors implemented DFL over both digital and analog wireless networks. They leveraged random linear coding to compress communications and utilized over-the-air computation to enable simultaneous analog transmissions.
In \cite{sun2022decentralized}, the authors proposed a momentum-based decentralized averaging algorithm, refining the local update process and using quantization technique to accelerate the convergence and reduce communication cost.
To further alleviate communication burdens, a sparsification strategy with adaptive coordination for wireless DFL was proposed in \cite{tang2022gossipfl}.
Moreover, periodic synchronization and compression were studied in \cite{liu2022decentralized} to strike an optimal balance between communication frequency and computational effort for DFL, while partial utilization of received messages was developed in \cite{ye2022decentralized} to ensure robust performance for wireless DFL with unreliable links.
Likewise, in \cite{du2023adaptive}, the authors developed a DFL algorithm for resource-constrained IoT networks that adaptively adjusts communication compression ratios, prunes inefficient links, and reallocates power to meet latency and energy constraints.

Despite extensive research, the prior works on DFL \cite{xing2021federated,sun2022decentralized,tang2022gossipfl,liu2022decentralized,ye2022decentralized,du2023adaptive} have either assumed finite energy supply at edge devices or entirely disregarded the energy constraints.
Nevertheless, relying exclusively on the limited energy stored in device batteries inevitably reduces the operational lifetime: once the battery is depleted, edge devices can no longer participate in the training process of DFL, thereby degrading the learning performance.
Moreover, frequently swapping or recharging batteries is impractical, especially when devices are situated in remote or inaccessible locations.
To overcome energy limitations and prolong device lifetimes, energy harvesting (EH) has emerged as a promising paradigm, enabling edge devices to collect ambient energy (e.g., solar, wind, or thermoelectric) and thereby operate sustainably.
Several recent studies have investigated FL with EH devices to address the energy limitations of battery-powered devices.
For instance, \cite{shen_federated_2022} investigated a device scheduling problem to accelerate the convergence of FL with EH devices.
Likewise, \cite{zeng_federated_2023} developed a joint client scheduling and resource allocation approach that reduces training latency while enhancing data utility for EH FL.
The authors of \cite{chen_energy_2022} further studied joint device selection and energy management for FL with EH devices, developing a bandwidth-efficient transmission scheme by exploiting the over-the-air computation.
Meanwhile, \cite{liu2021age} presented a joint device scheduling and bandwidth allocation algorithm aimed at minimizing the ages of local models and the global iteration latency. Lastly, \cite{an_online_2023} studied joint transceiver design and client selection, and proposed a Lyapunov-based online optimization algorithm to accelerate EH FL.
Notably, all these prior works on EH-enabled FL focus on centralized architectures in the presence of a central server.
It still remains unclear how EH can enhance the performance of DFL and how to design the optimal transmission policy to effectively utilize limited and uncertain harvested energy in fully decentralized scenarios.

In this paper, we are motivated to apply EH technique to DFL.
In contrast to centralized FL, EH-enabled DFL requires devices to self-coordinate local training and model exchanges over time-varying wireless channels and fluctuating energy supplies.
Such decentralized coordination makes device scheduling and power control considerably more complex, which can result in suboptimal utilization of scarce energy resources.
To overcome this challenge, we develop a multi-agent Markov decision process (MDP) framework to derive the optimal scheduling and transmission strategy for DFL with EH devices.
The proposed multi-agent MDP framework leverages the statistical knowledge of wireless channels and harvested energy to cope with their inherent stochastic characteristics, thus facilitating more efficient energy utilization and enhancing learning performance.
Although traditional MDP methods (e.g., value or policy iteration) can theoretically solve this problem, they require a centralized coordinator with access to all device states and exhibit exponential complexity in the number of devices, making them impractical for large-scale decentralized networks.
Thus, we propose a fully decentralized policy iteration algorithm that relies only on local state information, thereby substantially reducing communication overhead and computational burden.

%
%
\subsection{Contributions}

The main contributions of this paper are summarized as follows.

\begin{itemize}
	\item We propose an EH-enabled DFL framework for sustainable model training among energy-constrained edge devices.
	We derive the convergence bound for wireless DFL with EH devices, showing the impact of partial device participation and transmission packet drops, both of which are determined by the available energy supply.
	Building on the convergence bound, we formulate a joint device scheduling and transmission power control problem to accelerate the convergence.
	
	\item We develop a multi-agent MDP framework for the formulated problem. To tackle the prohibitive complexity and communication overhead of traditional centralized MDP methods, we propose a fully decentralized policy iteration algorithm that requires only local state information from two-hop neighboring devices. We further provide a theoretical analysis showing that the proposed decentralized algorithm achieves asymptotic optimality.
	
	\item  We validate the theoretical results and assess the performance of the proposed algorithm using real-world datasets. Experimental results show that the proposed algorithm consistently outperforms benchmarks, achieving superior test accuracy. We further investigate the effect of battery capacity on learning performance.
\end{itemize}

\subsection{Organization and Notations}

The remainder of this paper is structured as follows. In Section \ref{Sec_2}, we describe the system model. In Section \ref{Sec_3}, we present the convergence analysis and formulate the design problem. In Section \ref{Sec_MDP}, we introduce a multi-agent MDP framework and propose a decentralized policy iteration algorithm. Simulation results are presented in Section \ref{sim_result}, and Section \ref{Sec_conclusion} concludes the paper.

\emph{Notations:} The symbol $\b{R}$ represents the set of real numbers. Bold lowercase letters denote column vectors.
$\b{E}[\cdot]$ indicates the expectation operation. $\nabla$ denotes the gradient operator.
$[n]$ corresponds to the set $\left\lbrace 1,2,\ldots,n\right\rbrace $.
$|\cdot|$ and $\|\cdot\|$ denote the $\ell_1$ and $\ell_2$ norms, respectively.
$\b{P}(\cdot|\cdot)$ denotes the conditional probability. $\mathbbm{1}_{\left\lbrace \cdot \right\rbrace }$ is the indicator function.
The total variation distance between two probability measures $\mu$ and $\nu$ is defined as
$\textup{TV}(\mu,\nu) = \sup_{A} \big|\mu(A) - \nu(A)\big|$.

\section{System Model}\label{Sec_2}

\subsection{Decentralized Federated Learning Model}

\begin{figure}[tbp]
	\renewcommand\figurename{\small Fig.}
	\centering \vspace*{6pt} \setlength{\baselineskip}{10pt}
	\includegraphics[width = 0.49\textwidth,center,trim=1 0 20 11,clip]{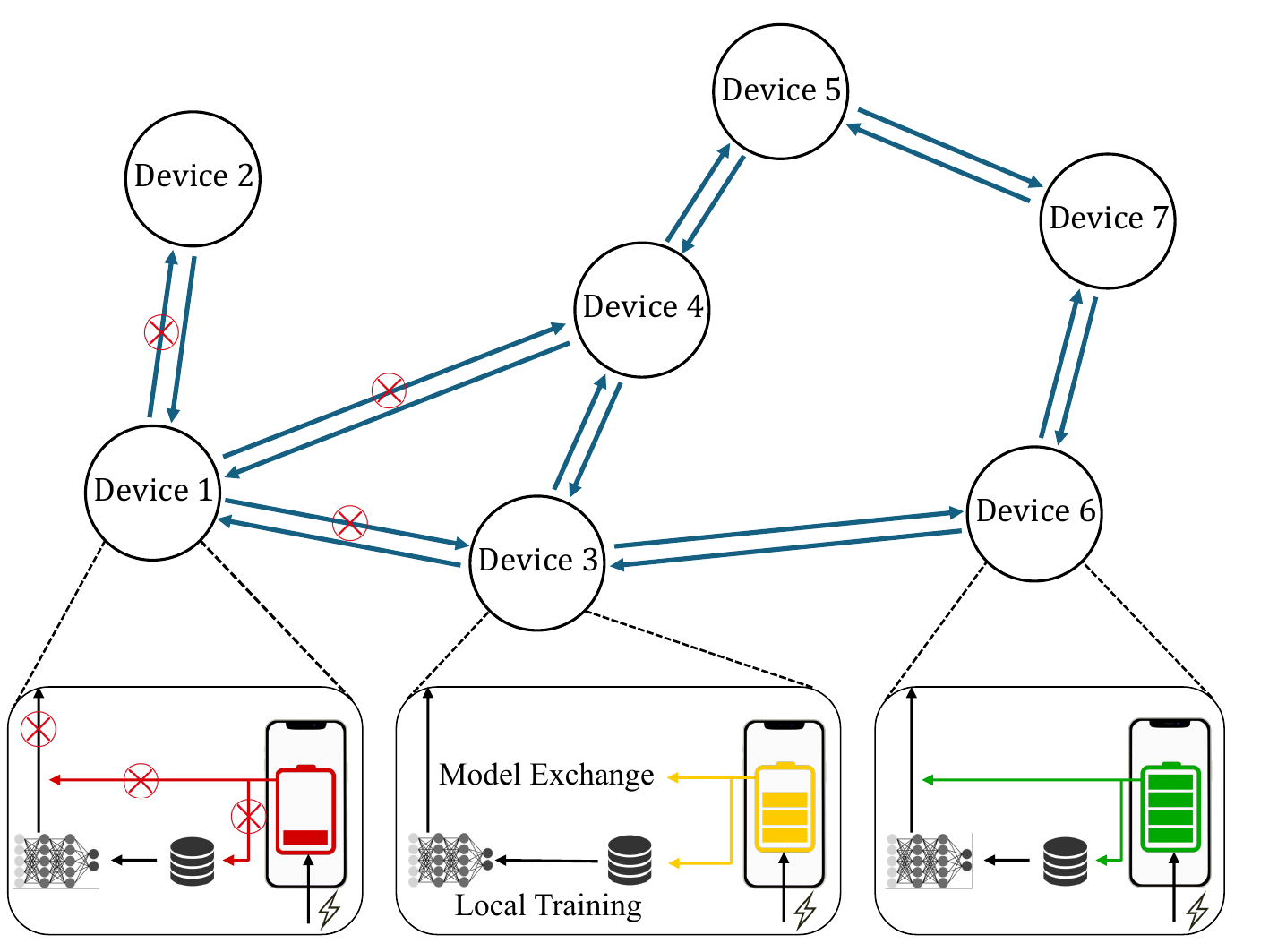}
	\caption{An illustration of the wireless DFL system with EH devices.}\label{system_model_fig}
\end{figure}

We consider a standard DFL system with a connected network comprised of $m$ wireless edge devices, as shown in Fig. \ref{system_model_fig}.
Each device $i$ is associated with a local loss function $F_i(\v{w})=  \b{E}_{\xi_i\sim \ca{D}_{i}} \left[  f_i(\v{w}; \xi_i )\right] $, where $f_i(\v{w}; \xi_i)$ denotes the sample-wise loss function that quantifies the loss of learning model $\v{w}$ on the data sample $\xi_i$, and $\ca{D}_i$ is the local dataset of device $i$.
The objective of the DFL system is to train a learning model $\v{w}$ that minimizes the global loss function $F(\v{w})$ as follows:
\begin{equation}\label{global_fl_problem}
	\begin{aligned}
		\min_{\v{w}} \; F(\v{w})=\frac{1}{m}\sum_{i=1}^{m}F_i(\v{w}).
	\end{aligned}
\end{equation}
Problem \eqref{global_fl_problem} is known as empirical risk minimization, which arises in many ML problems.

To address problem \eqref{global_fl_problem} in a decentralized manner, each device iteratively performs the following three steps:
\begin{itemize}
	\item Each device performs multiple steps of stochastic gradient descent (SGD) on its private dataset.
	\item Each device broadcasts its local model update to the neighboring devices within the network.
	\item Each device updates its local model by calculating a weighted average of the local updates received from its neighboring devices.
\end{itemize}
An iteration comprising the above three steps is defined as a time slot, indexed by $t$. The training continues until the maximum number of time slots $T$ is reached.

\subsection{Local Computation Model}

At each time slot \( t \), each device \( i \) first performs $K$ iterations of local SGD on its private dataset. Let \( \v{w}_{i,t} \) denote the local model of device \( i \) at time slot $t$, and initialize the model for local SGD by setting  \(\v{w}_{i,t}^{(0)} = \v{w}_{i,t}\). Then, for each local iteration \( k \in \{0, 1, \ldots, K-1\} \), device $i$ obtains a mini-batch $\vspace{0pt}$\( \hspace{2pt} \v{\xi}_{i,t}^{(k )} \hspace{0.5pt} \subseteq \hspace{0.5pt} \mathcal{D}_i \) from its private dataset and computes the stochastic gradient as follows:
\begin{equation}\label{eq:local_gradient}
	\v{g}_{i,t}^{(k)} = \nabla f_i(\v{w}_{i,t}^{(k)}; \v{\xi}_{i,t}^{(k)}).
\end{equation}
The local model is then updated as follows:
\begin{equation}\label{eq:local_update}
	\v{w}_{i,t}^{(k+1)} = \v{w}_{i,t}^{(k)} - \eta \, \v{g}_{i,t}^{(k)},
\end{equation}
where \( \eta > 0 \) represents the learning rate.
To account for the potential energy limitations of edge devices, we introduce a binary computation indicator \( \beta_{i,t} \in \left\lbrace 0,1\right\rbrace \). When \( \beta_{i,t} = 1 \), device $i$ executes the local SGD procedure at time slot \( t \); otherwise, the local model remains unchanged.
Accordingly, after the local update phase at time slot $t$, the local model of device \( i \) is given by
\begin{equation}
	\begin{aligned}\label{eq:update_scheduling}
		\v{w}_{i,t+\frac{1}{2}}^{} = \begin{cases}\v{w}_{i,t}^{(K)}, & \text { if } \beta_{i,t} = 1; \\ \v{w}_{i,t}^{}, & \text { if } \beta_{i,t} = 0.\end{cases}
		\end{aligned}    
\end{equation}
The local computation energy consumption for device $i$ at time slot $t$ can be expressed as \cite{yang2020energy}
\begin{equation}\label{eq:energy_consumption}
	e_{i,t}^{\text{cmp}} = \beta_{i,t} K  I_i^2  c_i  | \v{\xi}_{i} |,
\end{equation}
where \( I_i \) is the CPU cycle frequency of device \( i \), \( c_i \) denotes the number of CPU cycles to process a single data sample, and \(  | \v{\xi}_{i} | \) is the (constant) mini-batch size used in each local iteration.

\subsection{Communication Model}

In the DFL system, edge devices frequently exchange model updates via D2D communication to reach consensus on the model and collaboratively achieve global convergence.
The network topology is described by the mixing matrix, denoted as $\v{A} \in \b{R}^{N \times N}$. The $(i,j)$-th entry of $\v{A}$, denoted as $a_{i,j}$, is strictly positive if $i=j$ or devices $i,j$ are neighbors; otherwise, $a_{i,j} = 0$. Moreover, $\v{A}$ is symmetric and doubly stochastic, i.e., $\sum_{j=1}^{N}a_{i,j} =1, \forall i \in [N]$ and $\sum_{i=1}^{N}a_{i,j} =1, \forall j \in [N]$.
Let \( \lambda_i(\v{A} ) \) represent the \( i \)-th largest eigenvalue of \( \v{A}  \).
The symmetric property of $\v{A} $ indicates that its eigenvalues are real and can be sorted in the non-increasing order, i.e., 
\(
\lambda_1(\v{A} ) = 1 > \lambda_2(\v{A} ) \geq \ldots \geq \lambda_m(\v{A} ) > -1.
\) 
Furthermore, the mixing matrix $\v{A}$ also functions as the transition probability matrix of a Markov chain, whose rate of convergence to its steady-state distribution is governed by
\(
\lambda = \lambda(\v{A} ) := \max\{|\lambda_2(\v{A} )|, |\lambda_m(\v{A} )|\}.
\)

At each time slot $t$, once device $i$ completes the local SGD, it broadcasts its local model update $\Delta\v{w}_{i,t}=\v{w}_{i,t+\frac{1}{2}} - \v{w}_{i,t}$ to its neighbors.
We adopt a block-fading model for wireless channels among neighboring devices, wherein the channel fading is assumed to remain unchanged within each time slot.
Let $\tilde{h}_{i,j,t}$ denote the wireless channel from device $j$ to device $i$ at time slot $t$ with the channel gain $h_{i,j,t}=|\tilde{h}_{i,j,t}|$.
Each device transmits its local model update as a single packet, and the packet error rate cannot be ignored due to the limited transmission power.
Specifically, the packet error rate for the transmission from device $j$ to its neighboring device $i$ at time slot $t$ is given by \cite{xi2011general}
\begin{equation}\label{per}
	\begin{aligned}
		q_{i,j,t}=1-\exp\left(-\frac{ \varphi \left( \sum_{j^\prime \in \ca{N}_i \backslash j } p_{j^\prime,t} h_{i,j^\prime,t} +  \sigma^2_{i} \right) }{p_{j,t} h_{i,j,t}}\right),
	\end{aligned}    
\end{equation}
where $p_{j,t}$ denotes the transmission power, $\sigma_i^2$ denotes the total noise variance at device $i$, $\ca{N}_i$ is the neighbors of device $i$, and $\varphi$ is a constant parameter referred to as the waterfall threshold \cite{xi2011general}.
The communication energy consumption for device $i$ at time slot $t$ is given by $e_{i,t}^{\text{com}}=p_{i,t} \tau$, where $\tau$ is the transmission duration.
We further introduce $\zeta_{i,j,t}\in\left\lbrace 0,1\right\rbrace$ as the indicator for successful transmission.
In particular, $\zeta_{i,j,t}=0$ denotes that device $i$'s received local update from device $j$ at time slot $t$ is dropped due to the occurrence of packet error, while $\zeta_{i,j,t}=1$ indicates successful reception. Consequently, for all $i \neq j$, we have
\begin{equation}\label{succ_index}
	\begin{aligned}
		\zeta_{i,j,t} = \begin{cases}1, & \text { with probability } 1-q_{i,j,t}; \\ 0, & \text { with probability } q_{i,j,t}.\end{cases}
	\end{aligned}    
\end{equation}
In DFL systems, to ensure global convergence and maintain model consistency, each device $i$ updates its local model by employing a weighted average of the successfully received local updates from the neighboring devices as follows:
\begin{equation}\label{update_global}
	\begin{aligned}
		\v{w}_{i,t+1} =\v{w}_{i,t} +  \sum_{j=1}^{m} \beta_{j,t} \zeta_{i,j,t} a_{i,j}  \Delta\v{w}_{j,t}.
	\end{aligned}    
\end{equation}
{\color{black}When a neighbor device $j$ is not scheduled (i.e., $\beta_{j,t}=0$) or its transmission is unsuccessful (i.e., $\zeta_{i,j,t}=0$), device $i$ does not obtain a model update from that neighbor at time slot $t$, and the corresponding term in \eqref{update_global} is absent. In this case, device $i$ simply proceeds with the aggregation using the updates that are successfully received at this time slot.}

\subsection{Energy Harvesting Model}

The finite battery capacity of edge devices presents a substantial obstacle for maintaining long-term operation in wireless DFL systems without external battery recharging.
To mitigate this challenge, each edge device is assumed to incorporate an EH unit that harvests renewable energy from ambient sources (e.g., solar, wind, or thermoelectric sources). The harvested energy is stored in a rechargeable battery with maximum capacity $b^{\max}$. The energy collected in time slot $t$ is first stored in the battery and becomes accessible for use in the subsequent slot $t+1$.
Formally, the evolution of the battery level \(b_{i,t}\) for device $i$ is governed by
\begin{equation}\label{evo_energy}
	\begin{aligned}
		b_{i,t+1}=\min \{b_{i,t}+u_{i,t}-e_{i,t}, b^{\max}\},
	\end{aligned}    
\end{equation}
where \(u_{i,t}\) denotes the amount of harvested energy, and \(e_{i,t} = e_{i,t}^{\text{cmp}} + e_{i,t}^{\text{com}}\) represents the total energy utilization for on-device computation and communication.
To ensure that energy usage never exceeds the currently available battery level, the following energy causality constraint is imposed:
\begin{equation}\label{causality_energy}
\begin{aligned}
    0\leq e_{i,t}\leq b_{i,t}, \forall i,t.
\end{aligned}
\end{equation}
This constraint ensures that each device schedules its computations and transmissions in a manner consistent with its available energy, preventing battery depletion and supporting extended system operation.

\section{Convergence Analysis and Problem Formulation}\label{Sec_3}

In this section, we first analyze the convergence of wireless DFL with EH devices, showing how partial device participation and transmission packet drops influence the learning performance. We then formulate a joint device scheduling and power control problem to accelerate the convergence.

\subsection{Convergence Analysis}
To enable the convergence analysis, we adopt the following conventional assumptions, which are commonly used in the existing FL literature \cite{liu2022hierarchical, 9210812, wan2021convergence,wang2024communication,yang2024unleashing,zhang2023online,zhang2024federated}.

\begin{assump}\label{ass_smoothness_assump}
	The local loss function $F_i(\v{w})$ is $L$-smooth. For any $\v{w}, \v{w}^\prime \in \b{R}^d$, $i \in [m]$, we have
	\begin{align}\label{smoothness}
		\begin{split}
			\left\| \nabla F_i (\v{w}) - \nabla F_i (\v{w}^\prime)\right\|  \leq L \|\v{w}-\v{w}^\prime\|.
		\end{split}
	\end{align}
\end{assump}

\begin{assump}\label{ass_bounded_var_assump}
	The stochastic gradient of each device has a bounded local variance, i.e., for any $\v{w}\in \b{R}^d $, $i\in [m]$, we have
	\begin{equation}\label{bounded_grad_eq}
		\begin{aligned}
			\mathbb{E}_{\ca{\xi} \sim \ca{D}_i}\left[ \left\| \nabla f_i(\v{w}, \ca{\xi}) - \nabla F_i(\v{w}) \right\|^2 \right] \leq \sigma_l^2.
		\end{aligned}    
	\end{equation}
	The local loss functions have a dissimilarity bound that characterizes their heterogeneity. For any $\v{w}\in \b{R}^d $, $i\in [m]$, we have
	\begin{equation}\label{bounded_heto_eq}
		\begin{aligned}
			\left\|\nabla F_i(\v{w}) - \nabla F(\v{w}) \right\|^2 \leq \sigma_g^2.
		\end{aligned}    
	\end{equation}
\end{assump}

\begin{assump}\label{ass_bounded_G_assump}
	The expected squared norm of the stochastic gradients is uniformly bounded, i.e., for any $\v{w}\in \b{R}^d $, $i\in [m]$, we have
	\begin{equation}\label{bounded_G}
		\begin{aligned}
			\mathbb{E}\left[ \left\| \nabla f_i(\v{w}, \ca{\xi}) \right\|^2 \right] \leq G^2.
		\end{aligned}    
	\end{equation}
\end{assump}

Based on the assumptions above, we establish the convergence bound of wireless DFL with EH for general nonconvex loss functions.
Specifically, we analyze the convergence of the average model, which is defined as
\begin{equation}\label{w_bar}
	\begin{aligned}
		\v{\bar w}_t = \frac{1}{m}\sum_{i=1}^{m}  \v{w}_{i,t}.
	\end{aligned}    
\end{equation}

\begin{thm}\label{thm_convergence_bound}
Under Assumptions~\ref{ass_smoothness_assump}, \ref{ass_bounded_var_assump}, \ref{ass_bounded_G_assump}, if we set $\eta=\tfrac{\sqrt{m}}{64LK\sqrt{T}}$, it follows that
\begin{equation}
	\begin{aligned}\label{convergence_bound}
		&\frac{1}{T}\sum_{t=1}^{T} \b{E} \left[  \left\| \nabla F\left( \bar{\v{w}}_{t} \right) \right\|^2  \right] \\		
		&   \leq  \frac{ 256 L \left( F\left( \bar{\v{w}}_{1} \right) - F\left( \v{w}^{*}\right) \right) }{ \sqrt{mT}}+  \frac{\sqrt{m} C_1}{2 K T^{\frac{1}{2}}} + \frac{m C_1 }{256 K T} \\
		&  + \frac{ \sqrt{m}\left(C_1 + 4 K G^2 \right) }{128 ( 1 - \lambda)K T^{\frac{3}{2}}} +  \frac{m\left(  C_1 + 4 K G^2 \right) }{128^2( 1 - \lambda)KT^2}\\
		&  +\! \hspace{-1pt}\frac{4  \!\left( \hspace{-1pt} KL \!+\! \sqrt{ K}\hspace{0pt} \right) \! \hspace{0pt}G^2}{(m-1)KT} \! \sum_{t=1}^{T}  \sum_{j=1}^{m} \sum_{ i\neq j}^{m}( \hspace{-0.5pt} \left( \hspace{-0.5pt} (\hspace{-0.5pt}m \!-\! 1\hspace{-0.5pt}) a_{i,j} q_{i,j,t}\!-\! 1 \hspace{-1pt}\right)\hspace{-1pt}  \beta_{j,t} \!+\! 1  \hspace{-0.5pt} ),\\
	\end{aligned}
\end{equation}
where $C_1 = \sigma_l^2 +4 K \sigma_g^2$.
\end{thm}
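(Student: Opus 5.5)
We follow the standard nonconvex local-SGD/decentralized-SGD template. The central step is to compare the actual (random, partial) aggregation \eqref{update_global} with an ideal full-participation update. For every device $i$ we write
\[
\v{w}_{i,t+1}=\v{w}_{i,t}+\sum_{j} a_{i,j}\Delta\v{w}_{j,t}+\v{e}_{i,t},\qquad
\v{e}_{i,t}=\sum_{j}\big(\beta_{j,t}\zeta_{i,j,t}-1\big)a_{i,j}\Delta\v{w}_{j,t}.
\]
Here we use $\zeta_{i,i,t}=1$, and $\Delta\v{w}_{j,t}=0$ whenever $\beta_{j,t}=0$. Since $\v{A}$ is doubly stochastic, averaging over $i$ gives
\[
\bar{\v{w}}_{t+1}=\bar{\v{w}}_t+\frac1m\sum_j\Delta\v{w}_{j,t}+\bar{\v{e}}_t,
\]
where $\Delta\v{w}_{j,t}=-\eta\beta_{j,t}\sum_k\v{g}_{j,t}^{(k)}$. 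We then apply $L$-smoothness of $F$ (Assumption~\ref{ass_smoothness_assump}) to $F(\bar{\v{w}}_{t+1})-F(\bar{\v{w}}_t)$. This yields three parts. The first is a descent term $-\eta K\|\nabla F(\bar{\v{w}}_t)\|^2$ together with the mismatch $\|\nabla F(\bar{\v{w}}_t)-\frac1{mK}\sum_{j,k}\nabla F_j(\v{w}_{j,t}^{(k)})\|^2$. The second is a variance term controlled by $\sigma_l^2$, the local drift and $\sigma_g^2$, which together give $C_1=\sigma_l^2+4K\sigma_g^2$. The third consists of the cross term $\langle\nabla F(\bar{\v{w}}_t),\bar{\v{e}}_t\rangle$ and the quadratic term $\tfrac L2\|\bar{\v{e}}_t\|^2$.

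The mismatch term splits into two pieces, both handled by standard lemmas proved along the way. The first is the local drift $\b{E}\|\v{w}_{j,t}^{(k)}-\v{w}_{j,t}\|^2\le 2\eta^2K^2(\sigma_l^2/K+\ldots)$, obtained from Assumption~\ref{ass_bounded_var_assump} via the usual $K$-step recursion. The second is the consensus error $\frac1m\sum_i\b{E}\|\v{w}_{i,t}-\bar{\v{w}}_t\|^2$. For the consensus error we unroll the recursion in stacked matrix form, $\v{W}_{t+1}=\v{W}_t+(\Delta\v{W}_t)\v{A}+\v{E}_t$. We bound the contraction using $\|\v{A}^s-\tfrac1m\v{1}\v{1}^\top\|\le\lambda^s$ and sum the geometric series. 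This produces the $1/(1-\lambda)$ factor, and the per-step increments are bounded by $\eta^2K(C_1+4KG^2)$ using Assumption~\ref{ass_bounded_G_assump}. After multiplying by $L^2$ and choosing $\eta=\sqrt m/(64LK\sqrt T)$, these contributions become the terms with $T^{-3/2}$ and $T^{-2}$ and the $(1-\lambda)^{-1}$ factor in \eqref{convergence_bound}. We use the condition $\eta\le 1/(64LK)$ (which holds, say, for $T\ge m$) to absorb the drift-induced gradient terms into half the descent term.

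The main obstacle is the packet-drop/participation term. Conditioned on the past, the error $\bar{\v{e}}_t$ is not zero-mean, because $\b{E}[\zeta_{i,j,t}]=1-q_{i,j,t}$. We therefore bound it crudely. Since $\b{E}\|\Delta\v{w}_{j,t}\|^2\le\eta^2K^2G^2$, the cross term can be handled with Young's inequality, or by bounding $\|\nabla F\|\,\|\Delta\v{w}\|\le\eta K G^2$. The quadratic term is controlled by $\b{E}[(1-\beta_{j,t}\zeta_{i,j,t})]a_{i,j}$ weighted appropriately. Expanding $\b{E}[1-\beta_{j,t}\zeta_{i,j,t}]=1-\beta_{j,t}(1-q_{i,j,t})$ and reorganizing, with the normalization $(m-1)$ coming from summing over $i\neq j$, gives exactly the quantity $((m-1)a_{i,j}q_{i,j,t}-1)\beta_{j,t}+1$. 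I expect that getting the bookkeeping to produce this precise affine form requires writing the error as $\sum_{i\neq j}$ of a term scaled by $1/(m-1)$ and using Jensen's inequality over neighbours. The prefactor is $KL+\sqrt K$: the $L$ part comes from the quadratic term, and the $\sqrt K$ part comes from the cross term after Young's inequality with a $\sqrt K$ balancing weight.

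Finally, we sum the one-step inequality over $t=1,\dots,T$ and telescope $F(\bar{\v{w}}_1)-F(\bar{\v{w}}_{T+1})\le F(\bar{\v{w}}_1)-F(\v{w}^*)$. We then divide by $\eta KT/4$ and substitute $\eta=\sqrt m/(64LK\sqrt T)$. This produces the leading $256L(F(\bar{\v{w}}_1)-F(\v{w}^*))/\sqrt{mT}$ term, the variance terms $\sqrt mC_1/(2K\sqrt T)$ and $mC_1/(256KT)$, and the remaining terms. The constants are then tuned to match \eqref{convergence_bound}.
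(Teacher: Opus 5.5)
Your overall architecture matches the paper's: $L$-smoothness on $\bar{\v{w}}_t$, a crude bounded-gradient bound on the participation/packet-drop error, drift and consensus bounds from decentralized local SGD, and telescoping with $\eta=\sqrt m/(64LK\sqrt T)$. Two steps, however, do not work as you wrote them.

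\textbf{First, the reference update.} With $\Delta\v{w}_{j,t}=0$ whenever $\beta_{j,t}=0$, your ``ideal'' step $\frac1m\sum_j\Delta\v{w}_{j,t}$ is already the partial-participation step. Its expected direction is $\frac1m\sum_j\beta_{j,t}\sum_k\nabla F_j(\v{w}_{j,t}^{(k)})$, so smoothness does not give $-\eta K\|\nabla F(\bar{\v{w}}_t)\|^2$; if no device is scheduled it gives nothing. Moreover, $\v{e}_{i,t}$ collapses to $\sum_j(\zeta_{i,j,t}-1)a_{i,j}\Delta\v{w}_{j,t}$, so the $(1-\beta_{j,t})$ part of the final quantity never appears. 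The paper instead compares with the virtual full-participation average $\bar{\v{w}}_t^{(K)}=\frac1m\sum_i\v{w}_{i,t}^{(K)}$, in which every device, scheduled or not, is imagined to run $K$ local steps:
\[
\begin{aligned}
\bar{\v{w}}_{t+1}-\bar{\v{w}}_t={}&\bar{\v{w}}_t^{(K)}-\bar{\v{w}}_t+\frac{\eta}{m}\sum_{j}(1-\beta_{j,t})\sum_{k}\v{g}_{j,t}^{(k)}\\
&+\frac{\eta}{m}\sum_{j}\sum_{i\neq j}a_{i,j}(1-\zeta_{i,j,t})\beta_{j,t}\sum_{k}\v{g}_{j,t}^{(k)}.
\end{aligned}
\]
Your expansion $\b{E}[1-\beta_{j,t}\zeta_{i,j,t}]=1-\beta_{j,t}(1-q_{i,j,t})$ is only meaningful with this reference. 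Once it is in place, the affine form is an exact identity, not a Jensen step. Using column-stochasticity and $q_{j,j,t}=0$,
\[
\begin{aligned}
\sum_{i}a_{i,j}\big(1-\beta_{j,t}(1-q_{i,j,t})\big)&=(1-\beta_{j,t})+\beta_{j,t}\sum_{i\neq j}a_{i,j}q_{i,j,t}\\
&=\frac{1}{m-1}\sum_{i\neq j}\big(((m-1)a_{i,j}q_{i,j,t}-1)\beta_{j,t}+1\big).
\end{aligned}
\]
No Young's inequality is used either. The paper bounds the inner product and the squared norm of the two error pieces directly via Assumption~\ref{ass_bounded_G_assump}, which is where the $\eta(KL+\sqrt K)G^2$ prefactor comes from.

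\textbf{Second, the consensus bound.} It cannot come from the recursion you wrote. Under \eqref{update_global} each device adds mixed \emph{increments} to its own model. So $\v{W}_{t+1}=\v{W}_t+(\Delta\v{W}_t)\v{A}+\v{E}_t$ unrolls to $\v{W}_t=\v{W}_1+\sum_{s<t}\big((\Delta\v{W}_s)\v{A}+\v{E}_s\big)$. No powers $\v{A}^{t-s}$ appear, hence no geometric series and no $1/(1-\lambda)$.

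Concretely, take two devices with linear losses $F_i(\v{w})=\v{c}_i^{\top}\v{w}$, which satisfy all three assumptions, with a common initialization, full participation and no drops. Then $\v{w}_{1,t}-\v{w}_{2,t}=-(1-2a_{1,2})\eta K(t-1)(\v{c}_1-\v{c}_2)$, so the disagreement grows linearly in $t$. The $\lambda^{s}$ contraction needs the models themselves to be mixed, $\v{W}_{t+1}=(\v{W}_t+\Delta\v{W}_t)\v{A}$.

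The paper does not derive this step. It imports the descent/drift bounds (from the proof of Theorem~1) and the consensus bound (Lemma~4) of \cite{sun2021decentralized}, which analyzes that model-mixing scheme. To reach the stated $(1-\lambda)^{-1}$ terms you must import the same lemma, and you inherit the same mismatch with \eqref{update_global}. Proving it ``along the way'' from your recursion will not work.
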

\begin{proof}
{The proof is presented in Appendix A.}
\end{proof}

\begin{rem}

The first five terms on the right-hand side (RHS) of \eqref{convergence_bound} arise from the initial error (i.e., the gap between the initial model $\bar{\v{w}}_{1}$ and the optimal model $\v{w}^{*}$) as well as other learning-related factors (e.g., device heterogeneity and data heterogeneity).
These terms gradually diminish as the number of time slots $T$ increases.
In contrast, the final term represents the persistent error introduced by partial device participation and transmission packet drops, and this error does not vanish as $T$ grows.
In particular, the convergence performance of wireless DFL with EH can be enhanced by increasing device participation and raising their transmission power.
Nevertheless, both device scheduling and transmission power are constrained by limited and uncertain harvested energy in practice, highlighting the importance of designing effective scheduling and transmission policies to balance energy constraints and convergence performance.

\end{rem}


\subsection{Problem Formulation}
In this paper, we focus on optimizing the device scheduling $\{\beta_{i,t}\}$ and transmission power $\{p_{i,t}\}$ to accelerate the convergence of wireless DFL with EH devices.
However, directly minimizing the global loss is intractable since it cannot be expressed in closed form with respect to the scheduling and transmission strategies.
To mitigate this challenge, we approximate the actual global loss using the convergence bound derived in Theorem \ref{thm_convergence_bound}.
In particular, we ignore the first five terms on the RHS of \eqref{convergence_bound}, as they are independent of the device scheduling and transmission power and go to zero\hspace{13pt} as\hspace{0.5pt} $T$ \hspace{0.5pt}increases. \hspace{0.5pt}Hence, \hspace{0.5pt}the \hspace{0.5pt}resulting \hspace{0.5pt}optimization \hspace{0.5pt}problem \hspace{0.5pt}is 

\hspace{-9.5pt}formulated as follows:
		\begin{align}\label{P1_orig}
		&\!\!\underset{\{\v{\beta}_{t},\v{p}_t\}}{\text{min}}  \frac{4 \left(  K L + \sqrt{K} \right)  G^2}{(m-1)KT} \nonumber \\
		&~~~~~~~ \times \sum_{t=1}^{T}  \sum_{j=1}^{m} \sum_{ i\neq j}^{m} ( \left(  (m - 1) a_{i,j} q_{i,j,t}- 1\right)  \beta_{j,t} + 1 ) \nonumber \\
		&~~\text{s.t.} \;  ~ \text{C1:\;} 0\leq e_{i,t}\leq b_{i,t}, \forall i,t, 
	\end{align}
where $\v{\beta}_{t}=\{\beta_{1,t},\ldots,\beta_{m,t}\}$ and $\v{p}_{t}=\{p_{1,t},\ldots,p_{m,t}\}$.
Constraint C1 enforces the energy causality requirement, ensuring that the energy consumed by each device does not exceed its available battery level.

To further simplify problem \eqref{P1_orig}, we express the device scheduling indicator $\beta_{i,t}$ in terms of the transmission power $p_{i,t}$.
Concretely, device $i$ is regarded as scheduled if and only if its transmit power $p_{i,t}$ is positive, i.e., $\beta_{i,t}=\mathbbm{1}_{\left\lbrace p_{i,t}> 0\right\rbrace }$.
Then, problem \eqref{P1_orig} can be reformulated as follows:
\begin{align}\label{P1_reformulated}
	&\underset{\{\v{p}_t\}}{\text{min}}  \frac{4 \left(  K L + \sqrt{K} \right)  G^2}{K T } \nonumber \\
	&~~~~ \times \! \sum_{t=1}^{T}\hspace{-1pt} \sum_{j=1}^{m}\hspace{-1pt} \sum_{i\neq j}^{m} \! a_{i,j} \!\! \left( \!\hspace{-1pt}1\!-\!\exp\! \!\left(\!\hspace{-1pt}-\frac{ \varphi \! \left(\! \sum_{\hspace{-1pt}j^{\hspace{-0.5pt} \prime}\hspace{-1pt} \in \ca{N}_i \hspace{-1pt}\backslash j } \hspace{-1pt} p_{j^{\hspace{-0.5pt}\prime}\hspace{-1pt},t} h_{i,j^{\hspace{-0.5pt} \prime} \hspace{-1pt},t} \!+\!  \sigma^2_{i} \right) }{p_{j,t} h_{i,j,t}}\!\right)\!\!\! \right)   \nonumber \\
	&~\text{s.t.} \;  ~ \text{C1}.
\end{align}

The reformulated problem~\eqref{P1_reformulated} remains challenging due to two main sources of uncertainty: (i) stochastic wireless channels, whose slow-fading behavior can be modeled as a finite-state Markov process, and (ii) the stochastic amount of harvested energy, which directly affects scheduling and power control decisions.
These stochastic processes unfold sequentially over time, with channel states and harvested energy levels only becoming available at each time slot, thereby rendering traditional offline optimization methods unsuitable.
Observing that both battery levels and channel conditions can be characterized by Markov processes, we model problem \eqref{P1_reformulated} as a multi-agent MDP, in which devices select their transmission power based on their local states (channel condition and battery level). In the following section, we illustrate how this MDP framework yields an optimal policy for device scheduling and power control.

\section{MDP Formulation and Decentralized Policy Iteration Algorithm}\label{Sec_MDP}

In this section, we develop a multi-agent MDP framework to derive the optimal transmission policy.
Then, we propose a fully decentralized policy iteration algorithm that requires only local state information from two-hop neighboring devices. We further present a theoretical analysis demonstrating that the proposed decentralized algorithm achieves asymptotically optimal performance.

\subsection{MDP Formulation}

To account for the temporal correlation of realistic wireless channels, we employ a finite-state Markov channel model, which is widely adopted to approximate various analytic and empirical fading models, such as indoor channels and Rayleigh fading channels \cite{sadeghi2008finite}.
Specifically, the wireless channel is discretized into $N_h$ states, with the corresponding channel gains represented by $\ca{H}={H_{1},H_{2},\ldots,H_{N_h}}$.
Without loss of generality, we assume that $H_k < H_{k^\prime}$ whenever $k < k^\prime$.
Then, the transition probability matrix for the $i$-th device's channel is given by
\begin{equation}\label{def_phi}
	\begin{aligned}
		\Psi_i=\left[\begin{array}{lll}
			\psi_{i}^{(1,1)} & \cdots & \psi_{i}^{(1,N_h)} \\
			\vdots & \ddots & \vdots \\
			\psi_{i}^{(N_h,1)} & \cdots & \psi_{i}^{(N_h,N_h)}
		\end{array}\right]
	\end{aligned}    
\end{equation}
with its $(k,k^\prime)$-th entry $\psi_{i}^{(k,k^\prime)}=\b{P}(h_{i,t+1}=H_{k^\prime}|h_{i,t}=H_{k})$.
Furthermore, the harvested energy $u_{i,t}$ is assumed i.i.d. across time slots, following the probability distribution $\b{P}(U_i)$ determined by the external environment of device $i$.
Similarly, the battery level of each device is discretized into a finite set denoted by $\ca{B}$.
By representing both the wireless channel dynamics and the battery evolution as homogeneous finite-state Markov chains, problem~\eqref{P1_reformulated} can be reformulated as a multi-agent MDP.
This MDP is characterized by the tuple $\left(\ca{S}, \ca{P}, \b{P}(\v{s}_{t+1}|\v{s}_{t},\v{p}_t), c(\v{s}_t,\v{p}_t)\right) $, where each element is described in the following.

\begin{itemize}
	\item \textbf{States}: Let $s_{i,t}=(\left\lbrace h_{i,j,t}  \right\rbrace ,b_{i,t}) \in \ca{S}_i$ represent the state of device $i$ at time slot $t$, where $\ca{S}_i$ is the device-specific state space. Accordingly, the global system state is given by $\v{s}_{t} = (s_{1,t},\ldots,s_{m,t})\in \ca{S}$, where $\ca{S}=\ca{S}_1\times\ldots\times\ca{S}_m$ denotes the global state space.
	\item \textbf{Actions}:
	Since practical wireless devices generally support only a finite number of transmission power levels, each device $i$ is assumed to determine its transmission power $p_{i,t}$ from a finite discrete action space $\ca{P}_i$.
	Consequently, the joint action space of all devices is expressed as $\ca{P}=\ca{P}_1\times\ldots\times\ca{P}_m$.
	
	\item \textbf{Transition probabilities}:
	At each time slot $t$, the transition probability $\b{P}(\v{s}_{t+1}|\v{s}_{t},\v{p}_t)$ specifies the distribution of the next state $\v{s}_{t+1}$ conditioned on the current state–action pair $(\v{s}_{t},\v{p}_t)$. It can be factorized as
	\begin{equation}\label{trans_prob}
		\begin{aligned}
			&\b{P}(\v{s}_{t+1}|\v{s}_{t},\v{p}_t)\\
			=&\prod_{i=1}^{m}\b{P}(s_{i,t+1}|s_{i,t},p_{i,t})\\
			=&\prod_{i=1}^{m}\b{P}(h_{i,t+1}|h_{i,t})\b{P}(b_{i,t+1}|h_{i,t},b_{i,t},p_{i,t}).
		\end{aligned}    
	\end{equation}
	Given the battery evolution in \eqref{evo_energy},  the first-order conditional probability for the battery level $b_{i,t}$ can be expressed as follows:
	\begin{equation}\label{trans_prob_b}
		\begin{aligned}
			\b{P}(b_{i,t+1}|h_{i,t},b_{i,t},p_{i,t})\quad\quad\quad\quad\quad\quad\quad\quad\quad\quad\quad\quad~~&\\
			=\begin{cases}\b{P}\left(U_i = b_{i,t+1}- b_{i,t} + e_{i,t} \right) ,&\text{if  } b_{i,t+1} < b^{\max};
				\\ \b{P}\left(U_i \geq b^{\max}- b_{i,t} + e_{i,t} \right),&\text{if  } b_{i,t+1} = b^{\max}.\end{cases} \nonumber
		\end{aligned}    
	\end{equation}
	\item \textbf{One-step cost}: Following the objective function in problem \eqref{P1_reformulated}, we define a cost function that characterizes the penalty arising from partial device participation and transmission packet drops. Formally, according to \eqref{P1_reformulated}, the one-step cost at time slot $t$ is given by
	\begin{equation}
		\begin{aligned}\label{cost}
			&c(\v{s}_{t},\v{p}_{t}) = \frac{4  \left(  K L + \sqrt{K} \right)  G^2}{K T} \\
			&\times  \sum_{j=1}^{m} \sum_{i\neq j}^{m} \! a_{i,j} \!\! \left( \!\hspace{-1pt}1\!-\!\exp\! \!\left(\!\hspace{-1pt}-\frac{ \varphi \! \left(\! \sum_{\hspace{-1pt}j^{\hspace{-0.5pt} \prime}\hspace{-1pt} \in \ca{N}_i \hspace{-1pt}\backslash j } \hspace{-1pt} p_{j^{\hspace{-0.5pt}\prime}\hspace{-1pt},t} h_{i,j^{\hspace{-0.5pt} \prime} \hspace{-1pt},t} \!+\!  \sigma^2_{i} \right) }{p_{j,t} h_{i,j,t}}\!\right)\!\! \right)\!.
		\end{aligned}
	\end{equation}
\end{itemize}

The goal of this MDP is to devise an optimal policy that achieves the minimum expected cumulative cost.
Formally, the policy is defined as $\pi=\{\pi_t : \ca{S} \rightarrow \ca{P}\}_{t=1}^T$, where $\pi_t$ maps the global system state $\v{s}_t$ to the transmission power vector $\v{p}_t$. For a specified policy $\pi$, its expected cumulative cost is expressed as
\begin{equation}\label{def_J}
	\begin{aligned}
		J(\pi)=\b{E}_{\pi}\left[ \sum_{t=1}^{T} c(\v{s}_{t},\v{p}_{t}) \bigg| \v{s}_{1}\right],\\
	\end{aligned}
\end{equation}
where $\v{s}_{1}$ denotes the initial global state.
By formulating the expected cumulative cost in this form, problem \eqref{P1_reformulated} can be recast as finding the optimal policy $\pi^*$ that minimizes the expected cumulative cost as follows.

\vspace{-3pt}
\begin{problem}\label{Problem1}
	\textup{(Expected Cumulative Cost Minimization Problem)}
	\begin{equation}
		\begin{aligned}
			\pi^*=\underset{\pi}{\textup{argmin}}~&J(\pi)\\
			\textup{s.t.} \quad & \textup{C1}.
		\end{aligned}    
	\end{equation}
\end{problem}
\vspace{-2pt}

To derive the optimal solution for Problem \ref{Problem1}, a straightforward approach is to apply classical (single-agent) dynamic programming methods, e.g., value iteration or policy iteration.
However, applying these methods to Problem \ref{Problem1} necessitates a centralized controller, which collects the local states of all devices, computes optimal actions by enumerating all state-action pairs, and then disseminates these actions to devices. Specifically, let $Q_t(\v{s}_t,\v{p}_t)$ denote the Q-value function, which represents the expected total cost incurred when action $\v{p}_t$ is executed in state $\v{s}_t$ at time slot $t$. For the final time slot $T$, it follows that
\begin{equation}\label{bellman_equ_terminal}
	\begin{aligned}
		Q_{T}(\v{s}_T,\v{p}_T)=&c(\v{s}_T,\v{p}_T),
	\end{aligned}    
\end{equation}
since no future steps remain beyond time slot $T$.
For each time slot $t \in \left\lbrace T-1, \ldots, 1 \right\rbrace $, the Q-value function $Q_{t}(\v{s}_t,\v{p}_t)$ can be computed via the Bellman recursion as follows:
\begin{equation}\label{bellman_equ}
	\begin{aligned}
		&Q_{t}(\v{s}_t,\v{p}_t)\!=\! c(\v{s}_t,\v{p}_t)\!+\!\b{E}\!\left[\min_{\v{p}_{t+1}\in \ca{P}_{t+1}}\! Q_{t+1}(\v{s}_{t+1},\v{p}_{t+1})\Big|\v{s}_t,\v{p}_t\!\right]\!,\\
	\end{aligned}    
\end{equation}
where $\ca{P}_{t}=\left\lbrace \v{p}_{t}  \in \ca{P}:0\leq e_{i,t}\leq b_{i,t}, \forall i \in [m]\right\rbrace$.
Intuitively, the Q-value function $Q_t(\v{s}_t,\v{p}_t)$ accumulates the one-step cost $c(\v{s}_t,\v{p}_t)$ plus the expected future cost over subsequent time slots.
Upon completing the backward recursion (i.e., once $Q_{1}(\cdot,\cdot), \ldots, Q_{T}(\cdot,\cdot)$ have been fully computed), the optimal policy $\pi^* = \left\lbrace \pi_t^* \right\rbrace $ is derived by choosing the joint action $\v{p}_t \in \ca{P}_{t}$ that minimizes $Q_{t}(\v{s}_t,\v{p}_t)$ as follows:
\begin{equation}\label{opt_policy}
	\begin{aligned}
		&\pi_t^*(\v{s}_t)= \argmin_{\v{p}_t \in \ca{P}_{t}}Q_{t}(\v{s}_t,\v{p}_t).\\
	\end{aligned}    
\end{equation}
For the online implementation phase, at each time slot $t$, the centralized controller collects the global state $\v{s}_t$ and determines the optimal joint transmit power vector $\v{p}_t$ accordingly.

Although the centralized method outlined above is conceptually straightforward and guarantees the globally optimal solution to Problem \ref{Problem1}, it poses several critical challenges in large-scale EH-enabled DFL systems:

$\bullet$  Exponential Complexity: The joint state-action space scales exponentially with the number of devices $m$, rendering global dynamic programming algorithms computationally intractable for large networks.

$\bullet$  Communication Overhead: Repeatedly gathering local states and transmitting globally determined actions can lead to substantial overhead on resource-constrained EH devices.

$\bullet$  Single Point of Failure: A fully centralized control paradigm reintroduces the risk of relying on one node for global coordination, undermining the system’s resilience.

These limitations motivate a shift toward \emph{fully decentralized} solutions, in which each device manages its local state and coordinates directly with its neighbors instead of relying on a single central controller.
In the following subsection, we propose a decentralized policy iteration algorithm, which alleviates the exponential complexity of enumerating the global state-action space, reduces communication overhead, and eliminates dependence on a single control node.

\subsection{Decentralized Policy Iteration Algorithm}
In this subsection, we propose a decentralized policy iteration algorithm, in which each device determines its transmission power using only the states of its two-hop neighboring devices.
This localization strategy ensures that both the computational complexity and communication overhead scale more favorably with the network size.

\subsubsection{Localized Policy}
We start by defining the two-hop localized policy of each device. Let $\hat{\ca{N}}_j = \{j\} \cup \ca{N}_j \cup \left(\cup_{i\in \ca{N}_j} \ca{N}_i  \right )$ denote the two-hop neighborhood of device $j$ (including device $j$ itself).
Then, the two-hop localized state and action of device $j$ are given by $\v{s}_{\hat{\ca{N}}_j ,t} = \left\lbrace \v{s}_{i,t} : i \in \hat{\ca{N}}_j  \right\rbrace $ and $\v{p}_{\hat{\ca{N}}_j ,t} = \left\lbrace p_{i,t}: i \in \hat{\ca{N}}_j  \right\rbrace $, respectively. 
Formally, the two-hop localized policy of each device $j$ is defined as $\pi_{j} = \left\lbrace \pi_{j,t}( p_{j,t} |\v{s}_{\hat{\ca{N}}_j , t})\right\rbrace $, which maps the two-hop localized state $\v{s}_{\hat{\ca{N}}_j , t}$ of device $j$ to a probability distribution supported on the set of local actions $p_{j,t} \in \ca{P}_j$ at each time slot $t$.
This localized policy enables each device to operate solely based only on the local information from its two-hop neighborhood.
{\color{black}
The proposed algorithm adopts a fixed two-hop neighborhood to balance coordination benefit and communication overhead. As a promising future direction, the neighborhood scope can be adjusted according to network conditions such as node degree and link reliability. For example, a device may start with a smaller neighborhood and expand the scope only when additional coordination yields clear performance gains, which can be useful in large-scale or sparse deployments.}

Building on the above localized representation, we further define a localized cost function and a corresponding localized Q-value function. 
Specifically, the localized cost function of device $j$ is defined as 
	\begin{equation}
	\begin{aligned}\label{thelocalizedcostfunction}
		&c_j(\v{s}_{\hat{\ca{N}}_j ,t},\v{p}_{\hat{\ca{N}}_j ,t} )= \frac{4 \left(  K L + \sqrt{K} \right)  G^2}{K T } \\
		&~~~~\times  \hspace{-1pt} \sum_{i=1,i\neq j}^{m} a_{i,j} \!\! \left( \!\hspace{-1pt}1\!-\!\exp\! \!\left(\!\hspace{-1pt}-\frac{ \varphi \left(\! \sum_{\hspace{-1pt}j^{\hspace{-0.5pt} \prime}\hspace{-1pt} \in \ca{N}_i \hspace{-1pt}\backslash j } \hspace{-1pt} p_{j^{\hspace{-0.5pt}\prime}\hspace{-1pt},t} h_{i,j^{\hspace{-0.5pt} \prime} \hspace{-1pt},t} +  \sigma^2_{i} \right) }{p_{j,t} h_{i,j,t}}\!\right)\!\! \right) \hspace{-1pt},
	\end{aligned}
\end{equation}
which depends on the two-hop localized state and action of device $j$.
Then, the overall cost of the system can be decomposed into the sum of these localized costs as $c(\v{s}_{t},\v{p}_{t}) = \sum_{j=1}^{m}c_j(\v{s}_{\hat{\ca{N}}_j ,t},\v{p}_{\hat{\ca{N}}_j ,t} )$.
The localized Q-value function of device $j$ is similarly defined as $Q_{j,t}(\v{s}_{\hat{\ca{N}}_j ,t},\v{p}_{\hat{\ca{N}}_j ,t} )$, which is recursively computed in the following algorithm design.

\subsubsection{Algorithm Development}

The complete procedure of the proposed decentralized policy iteration algorithm is outlined in Algorithm \ref{D_algorithm}.
The algorithm proceeds in backward order from the final time slot $T$ down to the first time slot. At time slot $T$, each device $i$ initializes its localized Q-value function $Q_{i,T}(s_{\mathcal{\hat N}_i,T}, p_{\mathcal{\hat N}_i,T})$ to the instantaneous localized cost $c_{i,T}(s_{\mathcal{\hat N}_i,T}, p_{\mathcal{\hat N}_i,T})$ (Lines~3--4). For any earlier time slot $t < T$, each device set its localized Q-value function by using a standard Bellman recursion:
\begin{equation}
	\begin{aligned}
		& Q_{i,t}(\v{s}_{\hat{\ca{N}}_i,t}, \v{p}_{\hat{\ca{N}}_i,t}) 
		= c_{i,t}(\v{s}_{\hat{\ca{N}}_i,t},\v{p}_{\hat{\ca{N}}_i,t})\\
		&+ \min_{\v{p}_{\hat{\ca{N}}_i,t+1}} \!\b{E}\!\left[ Q_{i,t+1}(\v{s}_{\hat{\ca{N}}_i,t+1}, \v{p}_{\hat{\ca{N}}_i,t+1}) \Big| \v{s}_{\hat{\ca{N}}_i,t}, \v{p}_{\hat{\ca{N}}_i,t} \right]  ,
	\end{aligned}    
\end{equation}
as indicated in Line~6.

\begin{algorithm}[tbp]
	\caption{Decentralized Policy Iteration Algorithm} \label{D_algorithm}
	\textbf{Initialize:} $t \leftarrow T$ \;
	
	\While{$t \geq 1$} 
	{
		\uIf{$t=T$}{
			For each device $i$ and $ (\v{s}_{\hat{\ca{N}}_i,T}, \v{p}_{\hat{\ca{N}}_i,T}) $,
			\vspace{-2pt}
			\begin{align}
			Q_{i,T}(\v{s}_{\hat{\ca{N}}_i,T}, \v{p}_{\hat{\ca{N}}_i,T}) =
			c_{i,T}(\v{s}_{\hat{\ca{N}}_i,T},\v{p}_{\hat{\ca{N}}_i,T}). \nonumber
			\end{align}
		}
		\Else{
			For each device $i$ and $ (\v{s}_{\hat{\ca{N}}_i,t}, \v{p}_{\hat{\ca{N}}_i,t}) $,
			\vspace{-2pt}
			\begin{equation}
				\hspace{-7pt}
				\begin{aligned}
					&Q_{i,t}(\v{s}_{\hat{\ca{N}}_i,t}, \v{p}_{\hat{\ca{N}}_i,t}) 
					= c_{i,t}(\v{s}_{\hat{\ca{N}}_i,t},\v{p}_{\hat{\ca{N}}_i,t})\\
					&+ \min_{\v{p}_{\hat{\ca{N}}_i,t+1}} \!\b{E}\!\left[ Q_{i,t+1}(\v{s}_{\hat{\ca{N}}_i,t+1}, \v{p}_{\hat{\ca{N}}_i,t+1}) \Big| \v{s}_{\hat{\ca{N}}_i,t}, \v{p}_{\hat{\ca{N}}_i,t} \right] \!. \nonumber
				\end{aligned}    
			\end{equation}
		}
		\mbox{Initialize $\pi_{i,t}^{1}$ and $Q_{i,t}^1(\v{s}_{\hat{\ca{N}}_i}, \v{p}_{\hat{\ca{N}}_i})\!=\! Q_{i,t}(\v{s}_{\hat{\ca{N}}_i}, \v{p}_{\hat{\ca{N}}_i}) , \forall i \!$ \;}
		
		\hspace{-3pt}
		\For{$r=1,\ldots R$}
		{
			For each device $i$ and $ (\v{s}_{\hat{\ca{N}}_i}, \v{p}_{\hat{\ca{N}}_i}) $, update
			\hspace{-20pt}
			\vspace{-2pt}
			\hspace{-20pt}
			\begin{align}
			\hspace{-7pt}
			\!\! Q_{i,t}^{r+1}(\v{s}_{\hat{\ca{N}}_i}, \v{p}_{\hat{\ca{N}}_i}) \!\hspace{-1pt}=\hspace{-1pt}\! \frac{1}{|   \hat{\ca{N}}_i   |}\!\sum_{j\in \hat{\ca{N}}_i}\! \hspace{-1pt} Q_{j,t}^r \! \hspace{-1pt} \left(\! \left[ \v{s}_{\hat{\ca{N}}_i}\right]_{\hat{\ca{N}}_j} \!,\hspace{-1pt} \left[ \v{p}_{\hat{\ca{N}}_i}\right]_{\hat{\ca{N}}_j} \!\right)\! \hspace{-1pt}, \nonumber
			\end{align} 
			\vspace{-7pt}
			
			\hspace{-0pt}and
			\begin{equation}
				\hspace{-13pt}
				\begin{aligned}\label{pi_improve}
					&\pi_{i,t}^{r+1}\left(  p_{i} \big|\v{s}_{\hat{\ca{N}}_i}\right) = \\
					&\! \frac{ \exp\!\left( -  \gamma \b{E}_{\! p_j \sim \pi_{j,t}^{r}\!\left(\!\!\left[ \v{s}_{\hat{\ca{N}}_i}\right]_{\hat{\ca{N}}_j}\!\!\right), j \in \hat{\ca{N}}_i\! \backslash \! i} \left[Q_{i,t}^{r+1}(\v{s}_{\hat{\ca{N}}_i}, \v{p}_{\hat{\ca{N}}_i}) \right] \right) }{\sum_{p_i}\!\! \exp\!\!\left(\! \!- \gamma\b{E}_{\! p_j \sim \pi_{j,t}^{r}\!\left(\!\!\left[ \v{s}_{\hat{\ca{N}}_i}\right]_{\hat{\ca{N}}_j}\!\!\right)\!, j \in \hat{\ca{N}}_i\! \backslash \! i}\! \left[\!Q_{i,t}^{r+1}(\v{s}_{\hat{\ca{N}}_i}, \v{p}_{\hat{\ca{N}}_i})\! \right]\!\! \! \right) }  \nonumber
				\end{aligned} 
			\end{equation}
			\vspace{-2pt}
		} 
		
		$\pi_{i,t} = \pi_{i,t}^{R+1}$\;
		
		$t \leftarrow t-1$\;
	}
	\mbox{\textbf{Output:} Localized policy $ \pi_{\textup{loc}} = \bigl\{\pi_{i,t} |  i \in [m], t \in [T]\bigr\} $ \;}
\end{algorithm}

Once the localized Q-value functions $Q_{i,t}(\v{s}_{\hat{\ca{N}}_i,t}, \v{p}_{\hat{\ca{N}}_i,t}) $ have been computed, the algorithm initializes the localized policy $\pi_{i,t}^1$ (Line~8) and proceeds with $R$ policy-improvement iterations (Lines~9--13). In each iteration $r$, each device $i$ aggregates the localized Q-values from its two-hop neighboring devices $\mathcal{\hat N}_i$ and then updates its own Q-value function $Q_{i,t}^{r+1}(\v{s}_{\hat{\ca{N}}_i}, \v{p}_{\hat{\ca{N}}_i})$ by averaging the two-hop neighbors’ Q-values (i.e., $Q_{j,t}^{r}, j\in \hat{\mathcal{N}}_i$) (Line 10).
This step enforces partial consensus among neighbors, effectively blending local information across the network.
Here, a ctritical challenge is that device $j \in \hat{\ca{N}}_i$ may in turn have its own two-hop neighbors $\hat{\ca{N}}_j$, which are not contained in $\hat{\ca{N}}_i$.
To address this, for the nodes $l \in \hat{\mathcal{N}}_j  \backslash \hat{\mathcal{N}}_i$, we assign a default state ${s}_{\textup{default}}$ and a default action $p_{\textup{default}}$ to replace their states and actions, respectively.
Formally, we define an extension operator for the local states as follows:
	\begin{equation}
	\begin{aligned}
		\left[ \v{s}_{\hat{\ca{N}}_i}\right]_{j}= \begin{cases} {s}_j, & \text { if  } ~j \in \hat{\ca{N}}_i; \\ {s}_{\textup{default}}, & \text { if  } ~j \notin \hat{\ca{N}}_i.\end{cases}
	\end{aligned}    
\end{equation}
A similar definition is used for the local actions as follows:
	\begin{equation}
	\begin{aligned}
		\left[ \v{p}_{\hat{\ca{N}}_i}\right]_{j}= \begin{cases} {p}_j, & \text { if  } ~j \in \hat{\ca{N}}_i; \\ {p}_{\textup{default}}, & \text { if  } ~j \notin \hat{\ca{N}}_i.\end{cases}
	\end{aligned}    
\end{equation}
We note that the default state $s_{\textup{default}}$ and the default action $p_{\textup{default}}$ are fixed and will not affect the final convergence result.
Next, each device refines its localized policy $\pi_{i,t}^{r+1}$ through a multiplicative-weights update (Line~12), giving higher probability to actions that minimize the aggregated localized Q-value.

By combining backward recursion with repeated local policy improvements, the algorithm achieves locally optimized decisions at each device, while the Q-value information propagated through two-hop neighborhoods collectively steers the system toward a near-global optimum.

{\color{black}

\subsubsection{Complexity and Communication Overhead}
In the offline policy computation stage, the main computational cost arises from computing the localized Q-value functions and the corresponding localized policies in Algorithm~\ref{D_algorithm}. Due to the localized structure, each device only enumerates its two-hop neighborhood state--action space, and thus the computational complexity depends on the two-hop neighborhood size rather than the total number of devices in the network. The communication overhead in this offline stage is dominated by the $R$ policy-improvement iterations, where each device exchanges localized Q-values and localized policies only with its two-hop neighbors. This involves localized peer-to-peer message passing and avoids any network-wide information collection. By contrast, centralized dynamic programming baselines operate over the global joint state--action space, which leads to exponential computational complexity in the number of devices and requires centralized access to all device states during policy computation.

In the online policy execution stage, each device executes the learned localized policy at each time slot via a lightweight lookup based on its current two-hop neighborhood state, and no additional network-wide communication is required. By contrast, centralized benchmarks typically need to repeatedly collect global state information and compute and distribute the joint action at every time slot, which incurs substantial per-slot communication overhead and limits scalability in large networks.

The above complexity discussion assumes a moderate two-hop neighborhood size. In dense network deployments, the number of two-hop neighbors can grow significantly, which enlarges the localized state--action space and may impose substantial computation and memory burdens when using a tabular implementation of the localized Q-values and policies. Let $d$ denote the maximum number of one-hop neighbors. In the worst case, the number of devices within two hops can scale in the order of $\mathcal{O}(d^2)$, and the localized state--action space can grow rapidly with $d$. In practical wireless DFL systems, however, the effective neighborhood size is often limited by physical-layer and energy constraints, since only a subset of nearby links with sufficiently reliable channel quality can sustain D2D communication and the available energy budget further restricts the number of neighbors that can be interacted with. For denser scenarios, the framework can be made more scalable by constructing a sparse mixing topology that prunes weak or low-utility links to limit the neighborhood size, and by replacing the tabular Q-function and policy with function approximation (e.g., a lightweight value function or policy network) to avoid storing and updating all state--action combinations.

}

{\color{black}
\begin{rem}
The exchange of localized Q-values and localized policies among two-hop neighbors incurs additional communication overhead and energy consumption. In practice, this overhead is typically moderate because the localized state and action spaces are discretized and the resulting localized Q-values and localized policies can be stored and shared in a finite lookup table form whose size is determined by each device’s two hop neighborhood rather than the entire network.
Moreover, the exchange of localized Q-values and policies is carried out in a one time offline policy computation stage. After the policy is obtained, devices only execute the learned localized policy online and do not require repeated policy learning or additional control message exchanges at each time slot. The offline computation is invoked again only when the environment changes materially such as topology changes or changes in channel and energy harvesting statistics.

\end{rem}}

\subsection{Performance Analysis}

In this subsection, we theoretically analyze the proposed decentralized policy iteration algorithm, demonstrating that it converges asymptotically to the globally optimal policy.
Intuitively, the performance gap between the proposed decentralized policy iteration algorithm and its centralized counterpart arises from an additional error introduced by truncating both the Q-value function and the policy to each device’s two-hop neighborhood, rather than the global network.
As will be established later, this localized approximation error vanishes as the number of policy improvement iterations $R$ increases.
Formally, we derive an upper bound for the gap between the expected cumulative costs under the localized policy $\pi_{\textup{loc}} $ generated by Algorithm \ref{D_algorithm} and the optimal policy $\pi^{*}$, as shown in the following theorem.
\begin{thm}\label{thm_convergence}
	By taking the algorithm parameter $\gamma \leq \frac{1}{32 m (L+1) G^2   | \ca{P} |^2} $, the gap between the expected cumulative costs under the localized policy $\pi_{\textup{loc}}$ generated by Algorithm \ref{D_algorithm} and the optimal policy $\pi^{*}$ is upper bounded by
	\begin{equation}\label{convergence_bound_MDP}
		\begin{aligned}
			&J\left(\pi_{\textup{loc}}\right) - J\left(\pi^*\right) \\
			\leq & \frac{4 m \left(  K L + \sqrt{K} \right)  G^2 D^R}{K }  \\
			& \times \sum_{t=1}^{T}  \sum_{i=1}^{m} \sup_{\v{s}} \textup{TV} \left( \pi_{i,t}^{1}\left(  p_{i} \big|\v{s}_{\hat{\ca{N}}_i}\right) , \pi_{i,t}^{*}\left(  p_{i} |\v{s}\right)  \right) \\
			= & \ca{O}\left(D^R\right)  .
		\end{aligned}    
	\end{equation}
	where $D = 32 \gamma m (L+1) G^2   \big| \ca{P} |^2 \in (0,1)$, and $\textup{TV} \left( \pi_{i,t}^{1}\left(  p_{i} \big|\v{s}_{\hat{\ca{N}}_i}\right) , \pi_{i,t}^{*}\left(  p_{i} |\v{s}\right)  \right) $ denotes the total variation distance between the initial policy $\pi_{i,t}^{1}$ and the optimal policy $\pi_{i,t}^{*}$.
\end{thm}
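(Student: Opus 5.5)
We bound the gap by a performance-difference argument combined with a contraction of the localized policy-improvement iterations in total variation. First, we write $J(\pi_{\textup{loc}})-J(\pi^*)$ as a telescoping sum over time slots. We compare the two policies slot by slot through the Q-functions $Q_t$ of \eqref{bellman_equ}:
\begin{equation*}
J(\pi_{\textup{loc}})-J(\pi^*)=\sum_{t=1}^{T}\b{E}_{\pi_{\textup{loc}}}\!\left[\b{E}_{\v{p}\sim\pi_{\textup{loc},t}}Q_t(\v{s}_t,\v{p})-\b{E}_{\v{p}\sim\pi^*_t}Q_t(\v{s}_t,\v{p})\right].
\end{equation*}
Each bracket is at most $2\sup|Q_t|$ times the total variation between the joint action distributions. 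By the product structure of both policies, the joint TV is at most $\sum_{i}\textup{TV}(\pi_{i,t}(\cdot|\v{s}_{\hat{\ca{N}}_i}),\pi^*_{i,t}(\cdot|\v{s}))$. The one-step cost \eqref{cost} is at most $\tfrac{4(KL+\sqrt K)G^2}{KT}\sum_j\sum_{i\ne j}a_{i,j}\le \tfrac{4m(KL+\sqrt K)G^2}{KT}$, since $\v{A}$ is doubly stochastic. Because the remaining horizon has at most $T$ slots, $|Q_t|\le \tfrac{4m(KL+\sqrt K)G^2}{K}$. This produces the prefactor $\tfrac{4m(KL+\sqrt K)G^2}{K}$ in \eqref{convergence_bound_MDP}. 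It also reduces the theorem to showing, for each $i,t$,
\begin{equation*}
\sup_{\v{s}}\textup{TV}\bigl(\pi^{R+1}_{i,t},\pi^*_{i,t}\bigr)\le D^R\sup_{\v{s}}\textup{TV}\bigl(\pi^{1}_{i,t},\pi^*_{i,t}\bigr).
\end{equation*}
Any leftover factor of $2$ or $\tfrac12$ from the TV/$\ell_1$ convention is absorbed by the conventions in the statement.

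Second, we prove this contraction by treating Line 12 as a softmax (multiplicative-weights) map applied to an expectation of $Q^{r+1}_{i,t}$ over the neighbors' current policies $\pi^r_{j,t}$. We view $\pi^*_{i,t}$ as a fixed point of the same operator built from the true global Q-function. The softmax is $\gamma$-Lipschitz from the sup-norm of its logits to TV; more precisely, the TV distance is at most $\gamma\|\Delta\ \text{logits}\|_\infty$ up to a constant. The logit difference splits into two parts:
\begin{itemize}
\item[(a)] the change caused by replacing $\pi^*_{j}$ by $\pi^r_{j}$ inside the expectation, which is bounded by $2\|Q\|_\infty\sum_{j}\textup{TV}(\pi^r_j,\pi^*_j)$;
\item[(b)] the discrepancy between the averaged localized Q-function (Line 10, with default states and actions padding outside $\hat{\ca{N}}_i$) and the true Q-function.
\end{itemize}
For (b), the key fact is that $c=\sum_j c_j$, with $c_j$ depending only on $\hat{\ca{N}}_j$. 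The error from substituting default entries therefore only enters through coordinates whose policies are themselves being averaged, so it can also be charged to $\sum_j\textup{TV}(\pi^r_j,\pi^*_j)$. The averaging step is a convex combination and hence nonexpansive in sup norm.

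Collecting terms should give a bound of the form $\gamma\cdot 32m(L+1)G^2|\ca{P}|^2$ times $\max_j\sup_{\v{s}}\textup{TV}(\pi^r_j,\pi^*_j)$. Here $|\ca{P}|^2$ comes from summing over action pairs when converting expectations of $Q$ differences into TV, $m$ comes from summing over devices, and $(L+1)G^2$ comes from the Q-bound with $\sqrt K/K\le1$. This yields the factor $D$. The hypothesis on $\gamma$ gives $D\in(0,1)$, and induction on $r$ yields $D^R$.

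The main obstacle is step (b). We must justify that the localized, default-padded, averaged Q-function reproduces the global Q-function closely enough that its error is proportional to policy TV rather than a constant floor. Otherwise a non-vanishing truncation term would remain alongside $D^R$. We would first try to exploit that $Q_{i,t}$ built in Lines 4/6 is an exact Bellman recursion for the decomposed cost, so that $\sum_i Q_{i,t}$ equals $Q_t$ under the current product policy. We would then absorb the default-state substitution into the $\hat{\ca{N}}_j$-dependence of $c_j$. Finally, we would check the constants against $32|\ca{P}|^2$, loosening them if needed.
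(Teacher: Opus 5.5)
There is a genuine gap in your second step, at term (b), and there is a deeper problem with its premise. Your first step is sound. The finite-horizon performance-difference identity with the optimal $Q_t$ of \eqref{bellman_equ}, the product structure of $\pi_{\textup{loc}}$ and of the deterministic $\pi^*$, and $0\le Q_t\le 4m(KL+\sqrt{K})G^2/K$ give exactly \eqref{proof_2_2}. This is more transparent than the paper's appeal to an external interaction-matrix lemma, and since $Q_t\ge 0$ no factor $2$ is lost.

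The ways you propose to close (b) do not work. There is no exact Bellman identity to exploit: Line 6 minimizes over $\v{p}_{\hat{\ca{N}}_i,t+1}$ separately for each device and outside the conditional expectation. So $\sum_i Q_{i,t}$ is a sum of $m$ separately optimized, open-loop terms and in general differs from $Q_t$. The default padding in Line 10 overwrites the states and actions of nodes in $\hat{\ca{N}}_j\setminus\hat{\ca{N}}_i$, and these nodes are not among the coordinates averaged in Line 12. The induced discrepancy is a function of the state and persists even when $\pi^r_{j,t}=\pi^*_{j,t}$ for all $j$, so it cannot be charged to $\sum_j\textup{TV}(\pi^r_{j,t},\pi^*_{j,t})$. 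It is exactly the constant floor you feared. With $e_r=\sum_i\sup_{\v{s}}\textup{TV}(\pi^r_{i,t},\pi^*_{i,t})$, the recursion becomes $e_{r+1}\le De_r+\varepsilon$, which yields $D^Re_1+\varepsilon/(1-D)$, not $\ca{O}(D^R)$. Separately, a one-step bound in terms of $\max_j$ does not give the per-device target you state, and iterating it loses a factor $m$. The contraction has to be run on $\sum_i\sup_{\v{s}}\textup{TV}$, as in \eqref{proof_2_9}.

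The more basic problem is the premise that $\pi^*_{i,t}$ is a fixed point of a softmax map. By \eqref{opt_policy}, $\pi^*$ is deterministic, so $\pi^*_{i,t}(\cdot|\v{s})$ is a point mass. Every iterate of Line 12, by contrast, has full support. Since $0\le Q^{r+1}_{i,t}\le 4(L+1)G^2$, all log-ratios of $\pi^{r+1}_{i,t}$ are at most $4\gamma(L+1)G^2\le 1/(8m|\ca{P}|^2)$ under the prescribed $\gamma$. Hence $\textup{TV}(\pi^{R+1}_{i,t},\pi^*_{i,t})\ge 1-e^{1/(8m|\ca{P}|^2)}/|\ca{P}_i|$ for every $R$, which stays near $1-1/|\ca{P}_i|$, whereas $D^R\to 0$. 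The contraction you reduce to is therefore false, and no treatment of (b) can repair it. The same near-uniformity of $\pi_{\textup{loc}}$ also keeps $J(\pi_{\textup{loc}})-J(\pi^*)$ generically bounded away from zero as $R$ grows.

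You will not find the missing ingredient in the paper's proof. It derives \eqref{proof_2_8} for two policy sequences generated by the same update from the same $Q^{r+1}_{i,t}$. Then, in \eqref{proof_2_9}, it simply substitutes $\pi^*$ for one of them, while also invoking Lemma~\ref{lemma1}, whose bounded log-ratio hypothesis fails for a point mass. In effect it assumes precisely the fixed-point property you flagged, which makes (b) vanish by fiat. An argument along your lines can only be made rigorous if it compares against a full-support (softmax-type) policy instead of the deterministic $\pi^*$, and if it keeps an additive term for the two-hop truncation.
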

\vspace{-10pt}
\begin{proof}
	{The proof is presented in Appendix B.}
\end{proof}

\begin{rem}
The theoretical result ensures that the decentralized policy generated by Algorithm \ref{D_algorithm} asymptotically converges to the optimal policy produced by a fully centralized solution.
By choosing a suitable algorithm parameter $\gamma$, the cost gap between the decentralized policy $\pi_{\textup{loc}}$ and the optimal policy $\pi^{*}$ converges to zero at a geometric rate as the number of iterations $R$ increases.
Intuitively, each additional iteration allows the algorithm to refine the localized policies and better approximate the global optimum, thereby improving performance.
However, growing $R$ also raises communication costs among devices, imposing a practical tradeoff between communication overhead and achievable accuracy.
\end{rem}

\section{Simulation Results}\label{sim_result}

In this section, we empirically validate the theoretical findings and assess the effectiveness of the proposed decentralized policy iteration algorithm using real-world datasets.

\subsection{Simulation Setups}

\subsubsection{Learning Model Setting}
We evaluate the proposed algorithm on two standard image classification benchmarks: Fashion-MNIST (FMNIST) \cite{xiao2017fashion} and CIFAR-10 \cite{krizhevsky2009learning}.
A CNN model is employed for the FMNIST dataset, whereas a ResNet-18 backbone~\cite{he2016deep} is utilized for the CIFAR-10 dataset.
The learning rates are set to 0.01 for the FMNIST dataset and 0.05 for the CIFAR-10 dataset, respectively.
Both datasets are divided into $m$ subsets, with each subset assigned to a distinct device as its local training dataset.
We investigate both i.i.d. and non-i.i.d. data distributions.
For i.i.d. scenarios, we implement uniform random data distribution across devices. To emulate realistic heterogeneity in non-i.i.d. settings, we employ a Dirichlet distribution Dir(0.8) for the datasets.

\begin{figure*}[htbp]
	\centering
	\begin{subfigure}{0.49\textwidth}
		\includegraphics[width=1\linewidth]{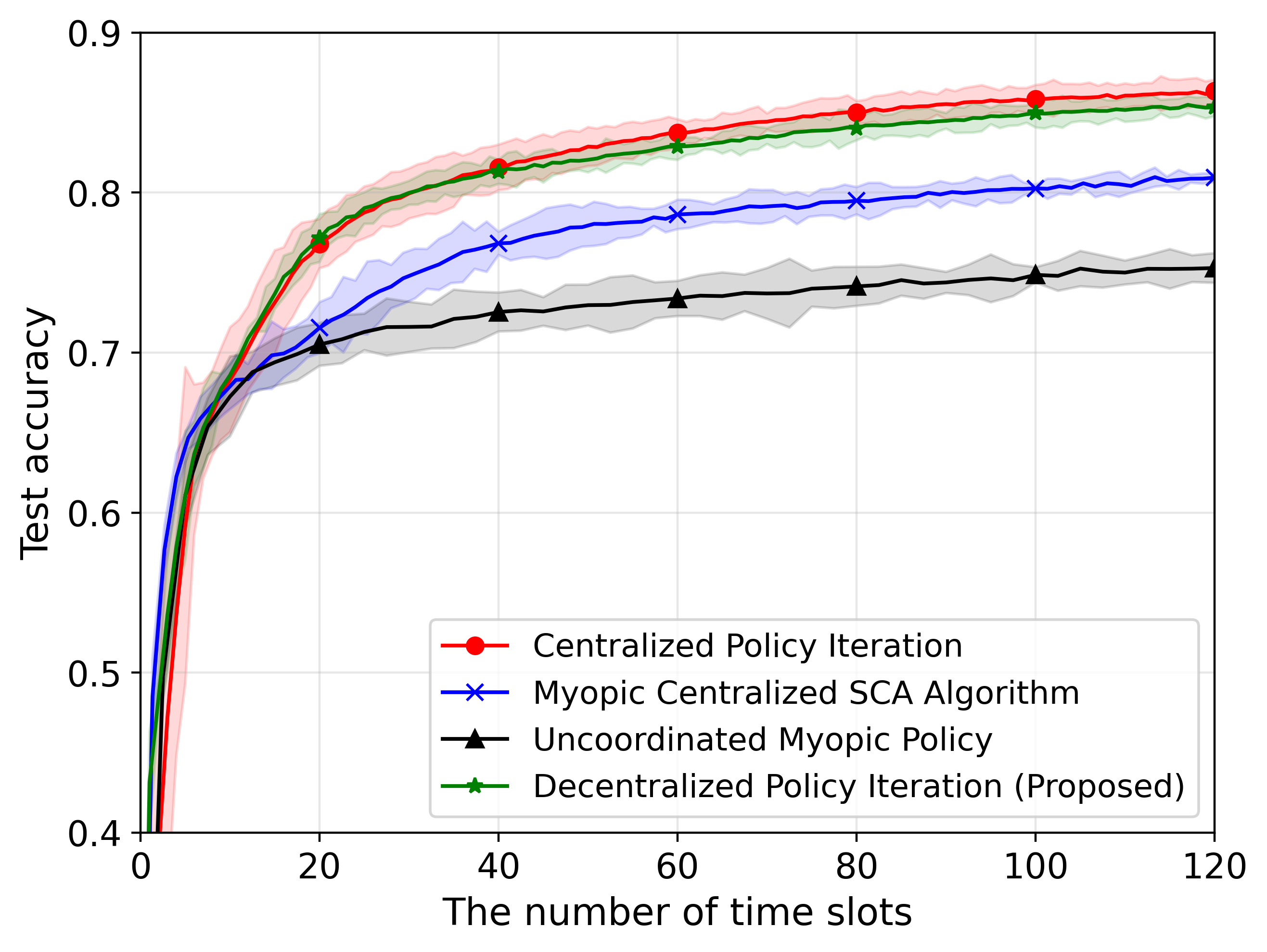}
		\vspace{-17pt}
		\caption{i.i.d.}
		\label{fig:subfig1}
	\end{subfigure}
	\begin{subfigure}{0.49\textwidth}
		\includegraphics[width=1\linewidth]{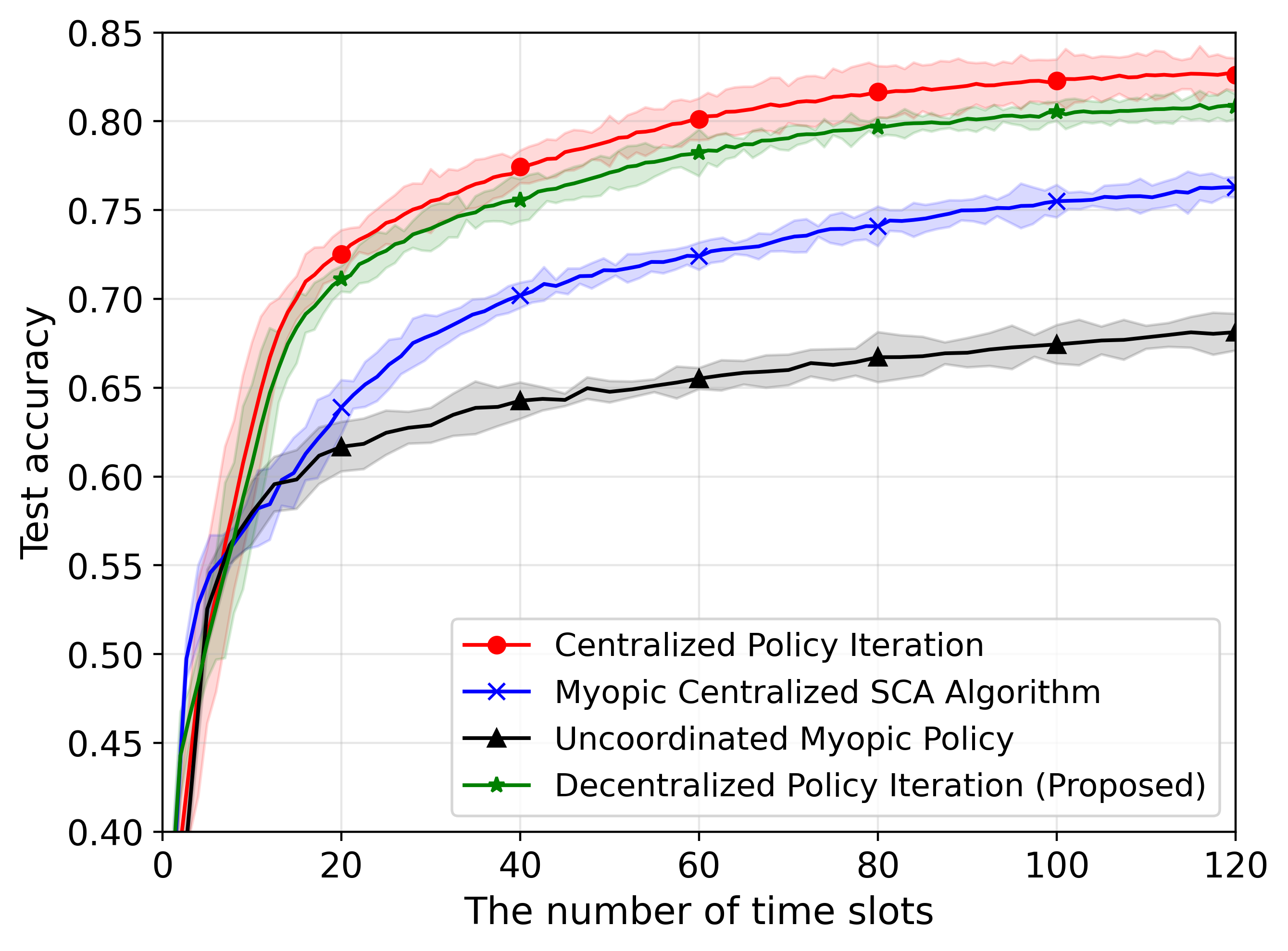}
		\vspace{-17pt}
		\caption{non-i.i.d.}
		\label{fig:subfig2}
	\end{subfigure}
	\vspace{-1pt}
	\caption{Test accuracy v.s. time slots for various transmission schemes on FMNIST.}
	\label{fig:three_subfigs}
\end{figure*}

\subsubsection{Communication Model Setting}
The inter-device wireless channels are modeled under Rayleigh fading and approximated by finite-state Markov channels \cite{tan2000first}.
The channel state at a given time slot is assumed to either remain unchanged or transition to an adjacent state in the following time slot, which aligns with slow-fading conditions. 
Specifically, the channel transition probability from state $H_k$ to state $H_{k^\prime}$ is approximated by \cite{sadeghi2008finite}:
\begin{equation}\label{sim_channel_trans}
	\begin{aligned}
		&\b{P}(h_{i,t+1}=H_{k^\prime}|h_{i,t}=H_{k})\\
		&=\begin{cases}\frac{Z(H_{k+1})\tau}{\b{P}({H_{k}})} , & \text { if } k^\prime = k + 1, \\   \frac{Z(H_{k})\tau}{\b{P}({H_{k}})} , & \text { if } k^\prime = k - 1,\\ 1 -  \frac{Z(H_{k+1})\tau}{\b{P}({H_{k}})} - \frac{Z(H_{k})\tau}{\b{P}({H_{k}})} , & \text { if } k^\prime = k,
		\end{cases}
	\end{aligned}    
\end{equation}
where $Z(H_{k})$ and $\b{P}(H_{k})$ denote the level crossing rate and the steady probability of channel state $H_k$, respectively, and $\tau$ refers to the duration of a single time slot.

\subsubsection{Energy Harvesting Model Setting}
For the energy harvesting model, we adopt a real-world solar irradiance dataset, which captures radiation intensity measurements recorded in June from 2008 to 2010 at a solar observation site located at Elizabeth City State University \cite{Andreas1981}.
Considering the compact size of edge devices, each solar panel is assumed to have an area of 25 $\text{cm}^2$.
The energy conversion efficiency is set to $20\%$, and the battery state of each device is randomly initialized.

\vspace{-2pt}
\subsection{Performance Comparison}


In this section, we present a performance comparison between the proposed decentralized policy iteration algorithm and the following benchmark methods, highlighting its strengths in achieving high learning performance and efficient energy utilization.

\begin{figure*}[htbp]
	\centering
	\begin{subfigure}{0.48\textwidth}
		\includegraphics[width=1\linewidth]{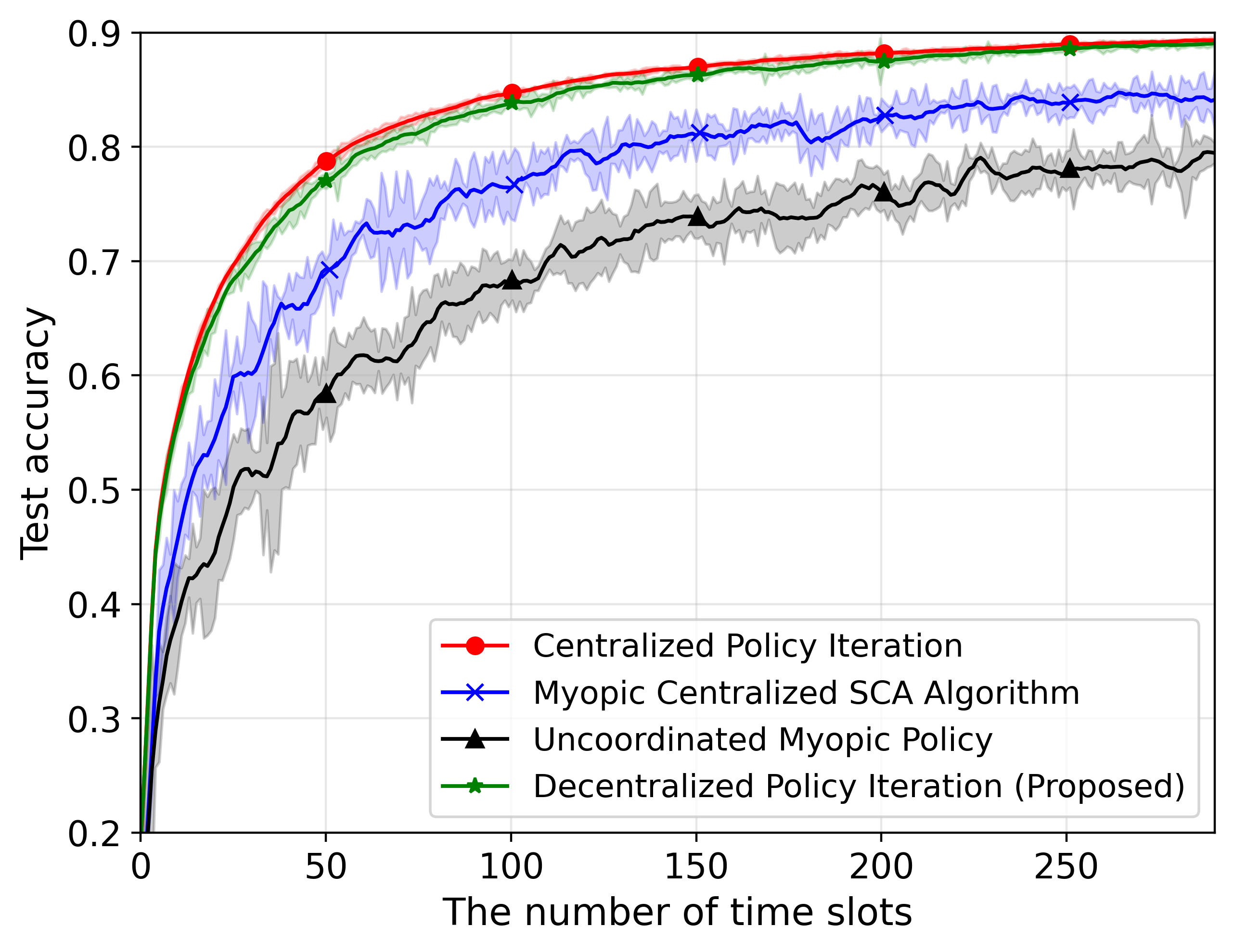}
		\vspace{-17pt}
		\caption{i.i.d.}
		\label{fig:subfig3}
	\end{subfigure}
	\begin{subfigure}{0.48\textwidth}
		\includegraphics[width=1\linewidth]{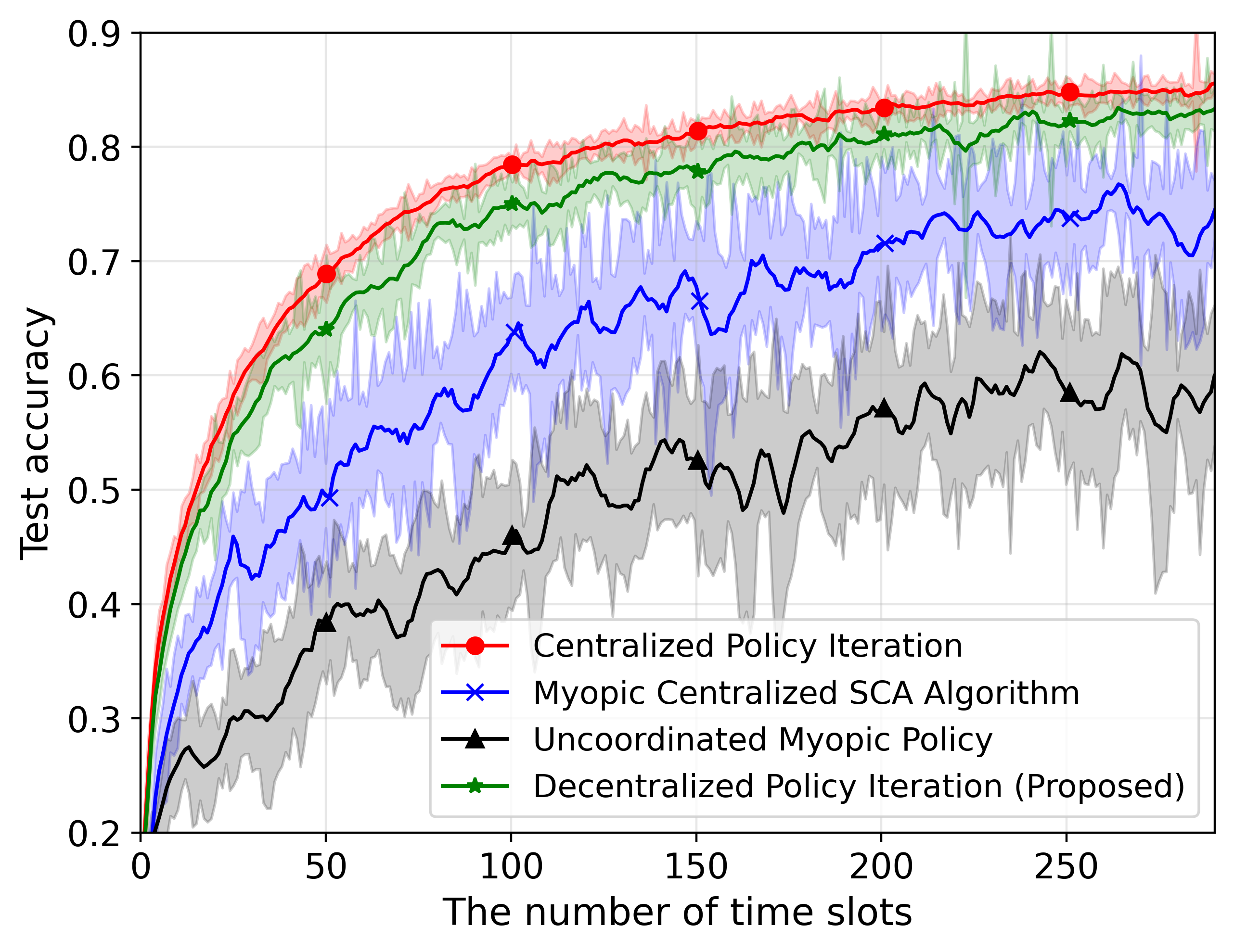}
		\vspace{-17pt}
		\caption{non-i.i.d.}
		\label{fig:subfig4}
	\end{subfigure}
		\vspace{-1pt}
	\caption{Test accuracy v.s. time slots for various transmission schemes on CIFAR-10.}
	\label{fig:three_subfigs5}
\end{figure*}

\begin{itemize}
	\item \textbf{Centralized Policy Iteration:} In this benchmark, all devices transmit their full state information to a single central node, which then executes a classical policy iteration procedure on the global system state. While this approach can theoretically achieve the optimal performance, it is impractical for EH-enabled DFL systems due to the prohibitive communication overhead and computational complexity.
	
	\item \textbf{Myopic Centralized SCA Algorithm:} In this benchmark, a centralized node minimizes the instantaneous system cost at each time slot without accounting for future network states or energy evolutions. Specifically, it collects the current state information from all devices and applies a successive convex approximation (SCA) approach to address the following optimization problem. 
		\begin{align}
			\underset{\v{p}_t}{\text{min}} & \hspace{-1pt} \sum_{j=1}^{m}\hspace{-1pt} \sum_{i\neq j}^{m} a_{i,j} \!\! \left( \!\hspace{-1pt}1\!-\!\exp\! \!\left(\!\hspace{-1pt}-\frac{ \varphi \! \left(\! \sum_{\hspace{-1pt}j^{\hspace{-0.5pt} \prime}\hspace{-1pt} \in \ca{N}_i \hspace{-1pt}\backslash j } \hspace{-1pt} p_{j^{\hspace{-0.5pt}\prime}\hspace{-1pt},t} h_{i,j^{\hspace{-0.5pt} \prime} \hspace{-1pt},t} \!+\!  \sigma^2_{i} \right) }{p_{j,t} h_{i,j,t}}\!\right)\!\! \right) \hspace{-1pt} \nonumber \\
			\text{s.t.}   &~ \textup{C1}.
		\end{align}

	\item \textbf{Uncoordinated Myopic Policy:} In this decentralized baseline, each device simply expends all of its available energy to maximize immediate transmission power, disregarding both future energy needs and interference management. Although straightforward to implement, this policy often leads to suboptimal global performance and frequent energy depletion, since no inter-device coordination or long-term planning is considered.
	
\end{itemize}

\begin{figure*}[htbp]
	\vspace{-3pt}
	\centering
	\begin{subfigure}{0.48\textwidth}
		\includegraphics[width=1\linewidth]{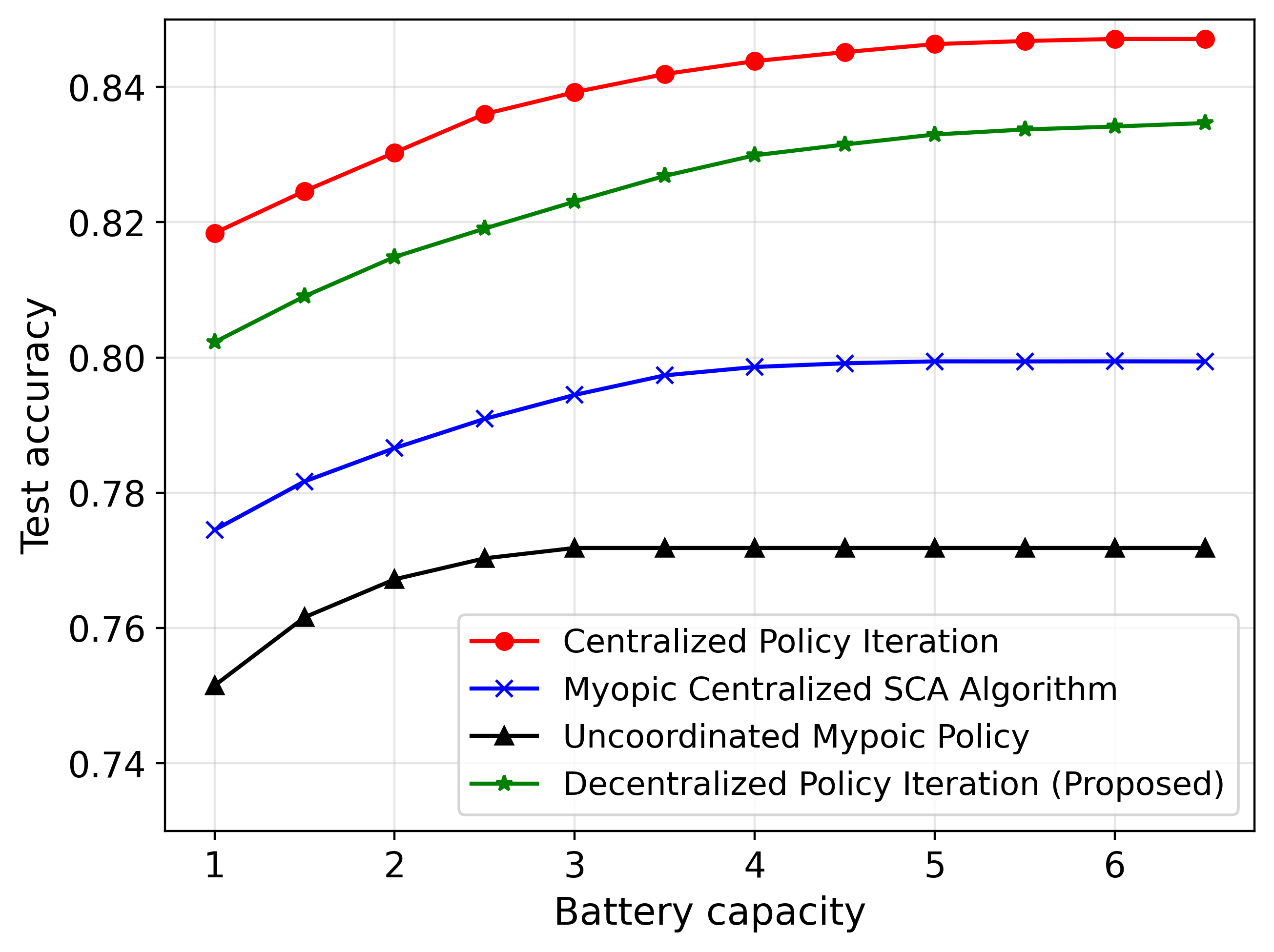}
		\vspace{-17pt}
		\caption{i.i.d.}
		\label{fig:subfig6}
	\end{subfigure}
	\begin{subfigure}{0.48\textwidth}
		\includegraphics[width=1\linewidth]{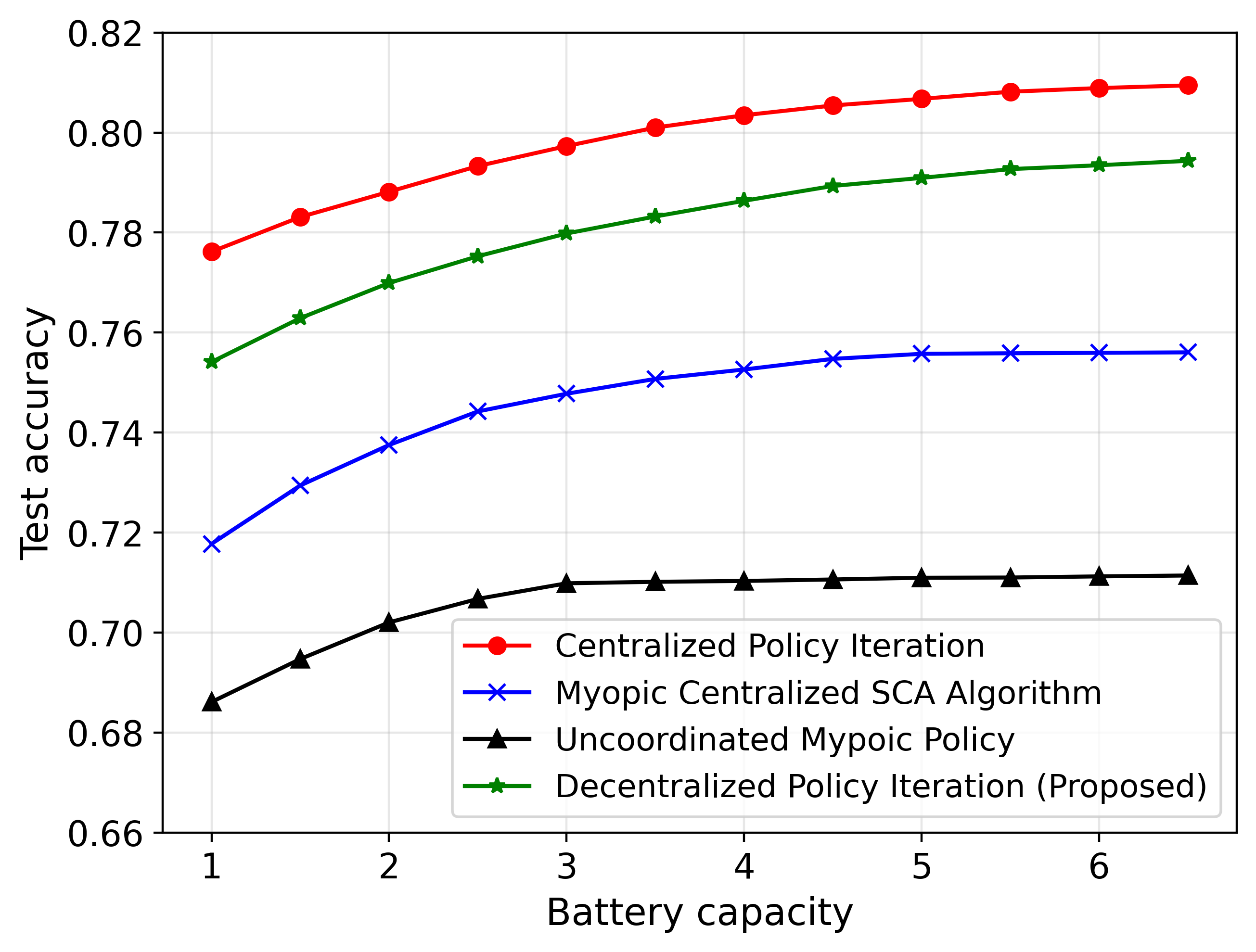}
		\vspace{-17pt}
		\caption{non-i.i.d.}
		\label{fig:subfig7}
	\end{subfigure}
	\vspace{-4pt}
	\caption{Test accuracy v.s. battery capacity for different transmission strategies on FMNIST.}
	\label{fig:three_subfigs8}
	\vspace{-2pt}
\end{figure*}

Fig. \ref{fig:three_subfigs} depicts the learning performance of four transmission strategies on the FMNIST dataset, under both i.i.d. and non-i.i.d. settings.
As observed, the proposed decentralized policy iteration algorithm closely matches the performance of the centralized policy iteration.
In contrast, the myopic centralized SCA algorithm forgoes long-term considerations by focusing only on immediate, per-slot optimization, resulting in lower accuracy. Meanwhile, the uncoordinated myopic policy performs the worst overall, as each device transmits at maximal power without regard for future battery levels or interference constraints.
A key observation is that the proposed decentralized policy iteration algorithm requires only two-hop local information and modest communication, yet still converges at a rate similar to that of the global benchmark. This provides a compelling balance between practicality and performance, particularly in EH-enabled DFL systems where centralized solutions are infeasible.

We further compare the four transmission schemes on the CIFAR-10 dataset, and the corresponding results are presented in Fig.~\ref{fig:three_subfigs5}.
The centralized policy iteration algorithm maintains the highest test accuracy, while the proposed decentralized policy iteration algorithm again achieves nearly the optimal performance while only using local two-hop information.
By contrast, the myopic centralized SCA algorithm attains moderate accuracy due to its short-term focus, and the uncoordinated myopic policy yields the lowest accuracy because of unsustainably high power usage. These results reinforce that the proposed approach strikes a practical balance between coordination overhead and near-optimal accuracy, even when scaling to a more complex dataset such as the CIFAR-10 dataset.

We also investigate the effect of battery capacity on the learning performance, as illustrated in Fig. \ref{fig:three_subfigs8}. As the battery capacity increases, all compared methods benefit from the additional energy available for data transmission, thereby improving classification accuracy. Nonetheless, the proposed decentralized policy iteration consistently outperforms the myopic benchmarks and performs near the optimal reference provided by the centralized policy iteration.
Moreover, as battery capacity increases, the performance margin between the proposed decentralized approach and the two myopic baselines widens.
This result indicates that the proposed method’s long-term planning allows it to exploit larger energy reserves more effectively, whereas the myopic strategies, which emphasize short-term objectives, cannot fully harness the additional battery capacity.

{\color{black}

\subsection{Ablation Studies}

In this subsection, we first conduct an ablation study on the neighborhood scope used for localized policy optimization. Specifically, we compare four variants of the proposed algorithm with different neighborhood scopes: no communication (each device acts based only on its own local state), one-hop, two-hop (proposed algorithm), and three-hop communication. In these variants, we generalize the two-hop localized policy by replacing the default two-hop neighborhood with a $k$-hop neighborhood (with $k\in\{0,1,2,3\}$). Table~\ref{tab:neighborhood_ablation} presents the final test accuracy under both i.i.d. and non-i.i.d. data distributions. As observed, the no-communication and one-hop variants yield clearly inferior performance, indicating that overly limited neighborhood information is insufficient for effective decentralized coordination. The proposed two-hop design achieves the best accuracy in both settings. Extending the neighborhood to three hops does not provide further improvement and can even degrade accuracy, which is consistent with the substantially larger localized state--action space and the increased coordination overhead induced by larger neighborhoods.

\begin{table}[t]
	\color{black}
	\centering
	\normalsize
	\caption{\color{black}Test accuracy of the proposed algorithm under different communication ranges and data distributions.}
	\label{tab:neighborhood_ablation}
	\begin{tabular}{p{4.3cm} p{1.6cm} p{1.6cm}}
		\toprule
		\textbf{Communication Range} & \textbf{IID} & \textbf{Non-IID} \\
		\midrule
		No communication     &   75.9\%  &   65.2\% \\
		One-hop              &   73.3\%  &   62.2\%  \\
		\textbf{Two-hop (proposed)}   &   \textbf{84.2\%}  &   \textbf{79.9\%}  \\
		Three-hop            &   80.9\%  &   71.7\%  \\
		\bottomrule
	\end{tabular}
\end{table}

We further evaluate the tradeoff between learning performance and the communication overhead induced by the number of policy-improvement iterations $R$. Fig.~\ref{fig:acc_r} shows the final test accuracy as a function of $R$, while keeping all other settings fixed. As $R$ increases, the test accuracy improves and then gradually saturates, demonstrating diminishing returns from further increasing $R$ and empirically confirming the theoretical tradeoff between communication cost and learning performance. In particular, when $R=0$, the policy-improvement step is skipped and no exchange and aggregation of localized Q-values is performed, leading to markedly worse and less stable performance. Based on this result, we set $R=20$ in all simulations to balance learning performance and the communication overhead induced by the policy-improvement iterations.

\begin{figure}[t]
	\centering
	\includegraphics[width=0.49\textwidth]{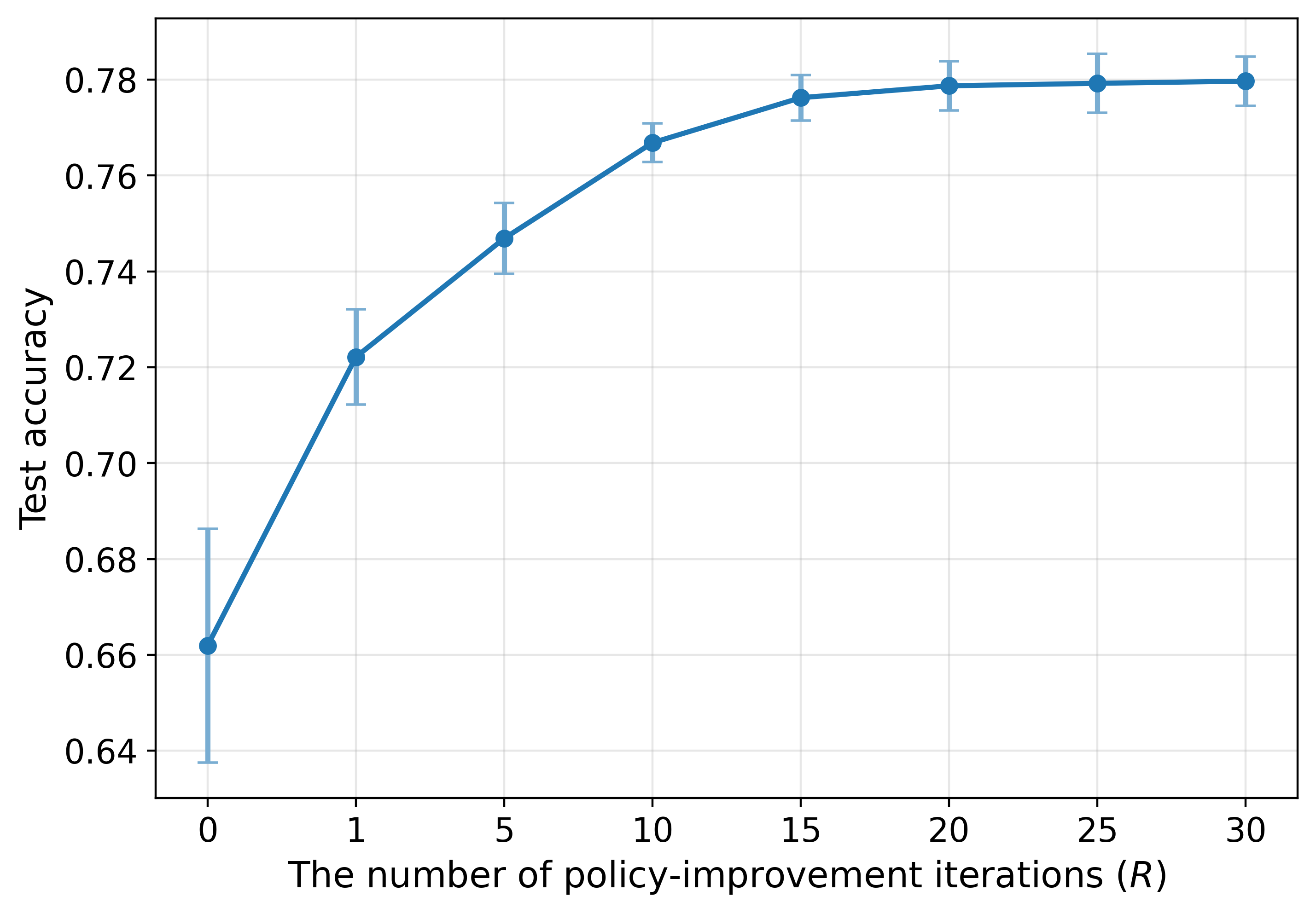}
	\caption{\color{black}Final test accuracy versus the number of policy-improvement iterations $R$.}
	\label{fig:acc_r}
\end{figure}

}

\section{Conclusion}\label{Sec_conclusion}
 
In this paper, we have investigated wireless DFL with EH devices for sustainable model training among energy-constrained edge devices. We have derived a convergence bound for EH-enabled DFL, showing how partial device participation and transmission packet drops impact learning performance. Building upon the convergence bound, we have investigated a joint device scheduling and power control problem and modeled it as a multi-agent MDP.
To tackle the prohibitive complexity and communication overhead of traditional centralized MDP methods, we have proposed a fully decentralized policy iteration algorithm that leverages only local state information from two-hop neighboring devices.
We have further provided a theoretical analysis showing the asymptotic optimality of the proposed decentralized solution.
Finally, we have presented comprehensive numerical experiments on real-world datasets, demonstrating the effectiveness of the proposed algorithm.

\begingroup        

\setlength{\abovedisplayskip}{1.1pt}
\setlength{\belowdisplayskip}{1.1pt}
\setlength{\abovedisplayshortskip}{2pt}
\setlength{\belowdisplayshortskip}{2pt}
\setlength{\jot}{2pt}

\section*{Appendix}

\subsection{Proof of Theorem \ref{thm_convergence_bound}}
To facilitate convergence analysis, we first introduce two auxiliary variables $\v{\bar w}_{t+\frac{1}{2}} = \frac{1}{m} \sum_{i=1}^{m}  \v{w}_{i,t+\frac{1}{2}} $ and $\v{\bar w}_{t}^{(K)} = \frac{1}{m} \sum_{i=1}^{m}\v{w}_{i,t}^{(K)}$.
According to the model update rules in equations \eqref{eq:local_update}, \eqref{eq:update_scheduling}, \eqref{update_global}, it follows that
	\begin{align}
		& \v{\bar w}_{t+1} - \v{\bar w}_{t} \nonumber\\
		\aeq &\v{\bar w}_{t+\frac{1}{2}} - \v{\bar w}_{t} + \frac{\eta}{m} \sum_{j=1}^{m} \sum_{i\neq j}^{m} a_{i,j} \left( 1  -  \zeta_{i,j,t} \right) \beta_{j,t} \sum_{k=0}^{K-1} {\v{g}}_{j,t,k} \nonumber\\
		\beq & \v{\bar w}_{t}^{(K)} - \v{\bar w}_{t} + \frac{\eta}{m} \sum_{j=1}^{m}\left(  1 -\beta_{j,t} \right)  \sum_{k=0}^{K-1} {\v{g}}_{j,t,k}  \nonumber\\
		&+ \frac{\eta}{m} \sum_{j=1}^{m} \sum_{i\neq j}^{m} a_{i,j} \left( 1  -  \zeta_{i,j,t} \right) \beta_{j,t} \sum_{k=0}^{K-1} {\v{g}}_{j,t,k} ,
	\end{align}
where (a) holds since the mixing matrix is doubly stochastic, and (b) follows from \eqref{eq:update_scheduling}. Then, according to Assmuption \ref{ass_smoothness_assump}, we have
	\begin{align}\label{app_1_1}
		& \b{E}\left[  F\left( \bar{\v{w}}_{t+1} \right)\right] \nonumber \\ 
		\leq &  \b{E} \left[  F\left( \bar{\v{w}}_{t} \right) \right]   + \b{E}\left[  \nabla F\left( \bar{\v{w}}_{t} \right)^{\s{T}} \left( \v{\bar w}_{t+1} - \v{\bar w}_{t}  \right)  \right]  \nonumber \\
		&+ \frac{L}{2} \b{E}\left[  \left\| \v{\bar w}_{t+1} - \v{\bar w}_{t}  \right\|^2 \right]  \nonumber  \\
		= &\b{E} \left[  F\left( \bar{\v{w}}_{t} \right) \right]   + \b{E}\left[  \nabla F\left( \bar{\v{w}}_{t} \right)^{\s{T}} \left(  \v{\bar w}_{t}^{(K)}  - \v{\bar w}_{t}  \right)  \right]  \nonumber  \\
		& + \frac{\eta}{m} \b{E}\Biggl[  \nabla F\left( \bar{\v{w}}_{t} \right)^{\s{T}} \Biggl( \sum_{j=1}^{m}\left(  1 -\beta_{j,t}\right)  \sum_{k=0}^{K-1} {\v{g}}_{j,t,k}  \nonumber \\
		&+  \sum_{j=1}^{m} \sum_{i\neq j}^{m} a_{i,j} \left( 1  -  \zeta_{i,j,t} \right)  \sum_{k=0}^{K-1} {\v{g}}_{j,t,k} \Biggr)  \Biggr]  \nonumber  \\
		& + \frac{L}{2} \b{E}\Biggl[  \Biggl\|  \v{\bar w}_{t}^{(K)}  - \v{\bar w}_{t} + \frac{\eta}{m}\sum_{j=1}^{m}\left( 1 - \beta_{j,t}\right)  \sum_{k=0}^{K-1} {\v{g}}_{j,t,k} \nonumber  \\
		&+ \frac{\eta}{m} \sum_{j=1}^{m} \sum_{i\neq j}^{m} a_{i,j} \left( 1  -  \zeta_{i,j,t} \right)  \sum_{k=0}^{K-1} {\v{g}}_{j,t,k} \Biggr\|^2 \Biggr] \nonumber  \\
		\aleq & \b{E} \left[  F\left( \bar{\v{w}}_{t} \right) \right]   + \b{E}\left[  \nabla F\left( \bar{\v{w}}_{t} \right)^{\s{T}} \left(  \v{\bar w}_{t}^{(K)}  - \v{\bar w}_{t}  \right)  \right] \nonumber \\
		&+ \eta \sqrt{K} G^2 \b{E}\left[ \sum_{j=1}^{m}\left( 1 - \beta_{j,t} \right) + \sum_{j=1}^{m} \sum_{i\neq j}^{m} a_{i,j} \left( 1  -  \zeta_{i,j,t} \right) \beta_{j,t}   \right]  \nonumber  \\
		& + L \b{E}\left[  \left\|  \v{\bar w}_{t}^{(K)}  - \v{\bar w}_{t}\right\|^2 \right]  \!+\! \frac{\eta L}{m^2} \b{E}\Biggl[  \Biggl\| \sum_{j=1}^{m}\left( 1 - \beta_{j,t} \right)\!  \sum_{k=0}^{K-1} {\v{g}}_{j,t,k}  \nonumber \\
		&+ \sum_{j=1}^{m} \sum_{i\neq j}^{m} a_{i,j} \left( 1  -  \zeta_{i,j,t} \right)  \sum_{k=0}^{K-1} {\v{g}}_{j,t,k}  \Biggr\|^2 \Biggr] \nonumber  \\
		\bleq &\b{E} \left[  F\left( \bar{\v{w}}_{t} \right) \right]   + \b{E}\left[  \nabla F\left( \bar{\v{w}}_{t} \right)^{\s{T}} \left(  \v{\bar w}_{t}^{(K)}  - \v{\bar w}_{t}  \right)  \right] \nonumber \\
		&+ \eta \sqrt{K} G^2 \b{E}\left[ \sum_{j=1}^{m}\left( 1 - \beta_{j,t} \right) + \sum_{j=1}^{m} \sum_{i\neq j}^{m} a_{i,j} \left( 1  -  \zeta_{i,j,t} \right) \beta_{j,t}   \right]  \nonumber  \\
		& + L \b{E}\left[  \left\|  \v{\bar w}_{t}^{(K)}  - \v{\bar w}_{t}\right\|^2 \right]  \nonumber \\
		& + \eta K L G^2 \b{E}\left[ \sum_{j=1}^{m}\left( 1 - \beta_{j,t}\right) + \sum_{j=1}^{m} \sum_{i\neq j}^{m} a_{i,j} \left( 1  -  \zeta_{i,j,t} \right) \beta_{j,t}   \right]   \nonumber \\
		= &  \b{E} \left[  F\left( \bar{\v{w}}_{t} \right) \right]   + \b{E}\left[  \nabla F\left( \bar{\v{w}}_{t} \right)^{\s{T}} \left(  \v{\bar w}_{t}^{(K)}  - \v{\bar w}_{t}  \right)  \right] \nonumber\\
		&+ L \b{E}\left[  \left\|  \v{\bar w}_{t}^{(K)} - \v{\bar w}_{t}\right\|^2 \right] + \eta \left(  K L + \sqrt{K} \right)  G^2  \nonumber \\
		&\times \b{E}\left[ \sum_{j=1}^{m}\left( 1 - \beta_{j,t} \right) + \sum_{j=1}^{m} \sum_{ i\neq j}^{m}  a_{i,j} \left( 1  -  \zeta_{i,j,t} \right) \beta_{j,t}   \right] , 
	\end{align}
where (a) and (b) follow from Assumption \ref{ass_bounded_G_assump}.

Then, according to the proof of Theorem 1 in \cite{sun2021decentralized}, when Assumptions \ref{ass_smoothness_assump} and \ref{ass_bounded_var_assump} hold, $  \b{E}\left[  \nabla F\left( \bar{\v{w}}_{t} \right)^{\s{T}} \left(  \v{\bar w}_{t}^{(K)}  - \v{\bar w}_{t}  \right)  \right]$ and $L \b{E}\left[  \left\|  \v{\bar w}_{t}^{(K)} - \v{\bar w}_{t}\right\|^2 \right]$ are respectively upper bounded by
	\begin{align}\label{app_1_2}
		&  \b{E}\left[  \nabla F\left( \bar{\v{w}}_{t} \right)^{\s{T}} \left(  \v{\bar w}_{t}^{(K)}  - \v{\bar w}_{t}  \right)  \right] \nonumber\\
		\leq & -\frac{\eta K}{2} \b{E} \left[  \left\| \nabla F\left( \bar{\v{w}}_{t} \right) \right\|^2  \right] + \frac{\eta^3 L^2 K^2 \left( 8 \sigma_l^2 + 32 K \sigma_g^2 \right) }{2} \nonumber\\
		&+  \frac{16\eta^3 L^2 K^3}{m} \sum_{i=1}^{m} \b{E} \left[  \left\| \nabla F\left( {\v{w}}_{i,t} \right) \right\|^2\right] ,
	\end{align}
and
\begin{equation}
	\begin{aligned}\label{app_1_3}
		&  L \b{E}\left[  \left\|  \v{\bar w}_{t}^{(K)}  - \v{\bar w}_{t}\right\|^2 \right] \\
		\leq & \eta^2 LK\left( 8 \sigma_l^2 + 32 K \sigma_g^2 \right) \!+\! \frac{16 \eta^2 L K^2}{m} \sum_{i=1}^{m} \b{E} \left[  \left\| \nabla F\left( {\v{w}}_{i,t} \right) \right\|^2 \right] .
	\end{aligned}
\end{equation}

Plugging \eqref{app_1_2} and \eqref{app_1_3} into \eqref{app_1_1}, we have
	\begin{align}\label{app_1_4}
		& \b{E}\left[  F\left( \bar{\v{w}}_{t+1} \right)\right] \nonumber \\ 
		\leq &  \b{E} \!\left[  F\left( \bar{\v{w}}_{t} \right) \right] \!-\!\frac{\eta K}{2} \b{E} \!\left[  \left\| \nabla F\left( \bar{\v{w}}_{t} \right) \right\|^2  \right] \!+\! \frac{\eta^3 L^2 K^2 \left( 8 \sigma_l^2 \!+\! 32 K \sigma_g^2 \right) }{2} \nonumber\\
		&+ \frac{16\eta^3 L^2 K^3}{m} \sum_{i=1}^{m} \b{E} \left[  \left\| \nabla F\left( {\v{w}}_{i,t} \right) \right\|^2\right]\nonumber \\
		&+  \eta^2 LK\left( 8 \sigma_l^2 \!+\! 32 K \sigma_g^2 \right) \!+\! \frac{32 \eta^2 L K^2}{m} \sum_{i=1}^{m} \b{E}\! \left[  \left\| \nabla F\left( {\v{w}}_{i,t} \right) \right\|^2 \right]\nonumber\\
		& + \frac{\eta \left(  K L + \sqrt{K} \right) G^2}{m-1}\nonumber\\
		& \times  \b{E}\left[ \sum_{j=1}^{m} \sum_{i\neq j}^{m} \left(  (m - 1) a_{i,j} \left( 1  -  \zeta_{i,j,t} \right) - 1\right)  \beta_{j,t} + 1 \right] \nonumber\\
		= &  \b{E} \left[  F\left( \bar{\v{w}}_{t} \right) \right] -\frac{\eta K}{2} \b{E} \left[  \left\| \nabla F\left( \bar{\v{w}}_{t} \right) \right\|^2  \right] \nonumber\\
		& + \frac{\left( \eta^3 L^2 K^2 + 2\eta^2 L K\right)  \left( 8 \sigma_l^2 + 32 K \sigma_g^2 \right) }{2}\nonumber\\
		&+ \frac{16 \eta^2 L K^2 + 16 \eta^3 L^2 K^3}{m} \sum_{i=1}^{m} \b{E} \left[  \left\| \nabla F\left( {\v{w}}_{i,t} \right) \right\|^2 \right]\nonumber\\
		& +\! \frac{\eta \!\left(  K L \!+\! \sqrt{K} \right) \!G^2}{m-1} \!   \sum_{j=1}^{m} \sum_{ i\neq j}^{m} (\left(  (m \!-\! 1) a_{i,j} q_{i,j,t}\!-\! 1\right)  \beta_{j,t} \!+\! 1 ). \nonumber\\
	\end{align}
Furthermore, we have
	\begin{align}\label{app_1_5}
		& \frac{\sum_{i=1}^{m} \b{E} \left[  \left\| \nabla F\left( {\v{w}}_{i,t} \right) \right\|^2 \right] }{m} \nonumber\\
		= &  \frac{\sum_{i=1}^m \mathbb{E}\left[ \left\| \nabla F\left( {\v{w}}_{i,t} \right) - \nabla F\left( {\v{\bar w}}_{t} \right) + \nabla F\left( {\v{\bar w}}_{t} \right) \right\|^2 \right] }{m} \nonumber\\
		\leq &\frac{2\sum_{i=1}^m \mathbb{E}\left[ \left\| \nabla F\left( {\v{w}}_{i,t} \right) - \nabla F\left( {\v{\bar w}}_{t} \right)  \right\|^2 \right] }{m} \nonumber\\
		& + \frac{2 \sum_{i=1}^m \mathbb{E}\left[ \left\| \nabla F\left( {\v{\bar w}}_{t} \right) \right\|^2 \right] }{m} \nonumber\\
		\aleq & \frac{2L^2 \sum_{i=1}^m \| {\v{w}}_{i,t} - {\v{\bar w}}_{t} \|^2}{m} 
		+ 2 \mathbb{E}\left[ \left\| \nabla F\left( {\v{\bar w}}_{t} \right) \right\|^2 \right] \nonumber\\
		\bleq &\frac{2 \eta^2 L^2  \left( 8 K \sigma_l^2 \!+\! 32 K^2 \sigma_g^2 \!+\! 32 K^2 G^2 \right) }{1 - \lambda} \!+\! 2 \mathbb{E}\left[ \left\| \nabla F\left( {\v{\bar w}}_{t} \right) \right\|^2 \right],
	\end{align}
where (a) holds due to Assumption \ref{ass_smoothness_assump}; (b) follows from Lemma 4 in \cite{sun2021decentralized} and we omit it for brevity.

Combining \eqref{app_1_5} into \eqref{app_1_4}, we can obtain that
	\begin{align}\label{app_1_6}
		& \b{E}\left[  F\left( \bar{\v{w}}_{t+1} \right)\right] \nonumber \\
		\leq &  \b{E} \left[  F\left( \bar{\v{w}}_{t} \right) \right] -\frac{\eta K}{2} \b{E} \left[  \left\| \nabla F\left( \bar{\v{w}}_{t} \right) \right\|^2  \right]  \nonumber\\
		&+ \frac{\left( \eta^3 L^2 K^2 + 2\eta^2 L K\right)  \left( 8 \sigma_l^2 + 32 K \sigma_g^2 \right) }{2} \nonumber\\
		&+ \frac{\left( 64 \eta^4 L^3 K^2 \!+ \!32 \eta^5 L^4 K^3\right) \left( 8 K \sigma_l^2 \!+\! 32 K^2 \sigma_g^2 \!+\! 32 K^2 G^2 \right)  }{m(1 - \lambda)}   \nonumber\\
		&+ \frac{32 \eta^2 L K^2 + 32 \eta^3 L^2 K^3}{m}  \mathbb{E}\left[ \left\| \nabla F\left( {\v{\bar w}}_{t} \right) \right\|^2 \right]  \nonumber\\
		& +\! \frac{\eta \!\left(  K L \!+\! \sqrt{K} \right) \!G^2}{m-1} \!   \sum_{j=1}^{m} \sum_{ i\neq j}^{m} ( \left(  (m \!-\! 1) a_{i,j} q_{i,j,t}\!-\! 1\right)  \beta_{j,t} \!+\! 1 )  \nonumber\\
		= &  \b{E} \! \left[  F\!\left( \bar{\v{w}}_{t} \right) \right]\!+\! \left( \! \frac{32 \eta^2 L K^2 \!+\! 32 \eta^3 L^2 K^3}{m} \!-\!\frac{\eta K}{2} \!\right) \! \b{E}\! \left[ \! \left\| \nabla F\!\left( \bar{\v{w}}_{t}  \right) \right\|^2 \!\right]  \nonumber\\
		&+ \frac{\left( \eta^3 L^2 K^2 + 2\eta^2 L K\right)  \left( 8 \sigma_l^2 + 32 K \sigma_g^2 \right) }{2} \nonumber\\
		&+ \frac{\left( 64 \eta^4 L^3 K^2 \!+ \!32 \eta^5 L^4 K^3\right) \left( 8 K \sigma_l^2 \!+\! 32 K^2 \sigma_g^2 \!+\! 32 K^2 G^2 \right)  }{m(1 - \lambda)}  \nonumber\\
		& +\! \frac{\eta \!\left(  K L \!+\! \sqrt{K} \right) \!G^2}{m-1} \!   \sum_{j=1}^{m} \sum_{ i\neq j}^{m} ( \left(  (m \!-\! 1) a_{i,j} q_{i,j,t}\!-\! 1\right)  \beta_{j,t} \!+\! 1 ) .  \nonumber\\
	\end{align}
Summing the inequality \eqref{app_1_6} from $t=1$ to $T$, it follows that
	\begin{align}
		& \frac{1}{T} \left( \frac{\eta K}{2} - \frac{32 \eta^2 L K^2 + 32 \eta^3 L^2 K^3}{m}  \right)  \sum_{t=1}^{T}\b{E} \left[ \left\| \nabla F\left( \bar{\v{w}}_{t} \right) \right\|^2 \right]\nonumber\\
		\leq & \frac{ \b{E} \!\left[ F\!\left(\hspace{-1pt} \bar{\v{w}}_{1} \hspace{-1pt}\right) \!-\! F\!\left(\hspace{-1pt} \v{\bar w}_{T+1}\hspace{-1pt}\right) \right] }{T} \!+\! \frac{\left( \eta^3 \hspace{-1pt} L^2\hspace{-1pt} K^2 \!+\! 2\eta^2 \hspace{-1pt} L K\right) \! \left( 8 \sigma_l^2 \!+\! 32 K \sigma_g^2 \right) }{2}\nonumber\\
		&\!\!\!\! + \frac{\left( 64 \eta^4 L^3 K^2 \!+ \!32 \eta^5 L^4 K^3\right) \left( 8 K \sigma_l^2 \!+\! 32 K^2 \sigma_g^2 \!+\! 32 K^2 G^2 \right)  }{m(1 - \lambda)}  \nonumber\\
		& \!\!\!\! +\! \frac{\eta \!\left( \! K L \!+\! \sqrt{K} \right) \!G^2}{(m-1)T} \! \sum_{t=1}^{T} \! \sum_{j=1}^{m}\! \sum_{ i\neq j}^{m}\! ( \left(  (m \!-\! 1) a_{i,j} q_{i,j,t}\!-\! 1\right)  \beta_{j,t} \!+\! 1 ).
	\end{align}
Then, we can further rearrange terms and obtain that
\begin{equation}
	\begin{aligned}
		& \frac{1}{T} \left( \frac{1}{2} - \frac{32 \eta L K + 32 \eta^2 L^2 K^2}{m}  \right)  \sum_{t=1}^{T}\b{E} \left[  \left\| \nabla F\left( \bar{\v{w}}_{t} \right) \right\|^2 \right] \\
		\leq & \frac{  \b{E}\left[  F\!\left( \bar{\v{w}}_{1} \right) \!-\! F\!\left( \v{\bar w}_{T+1}\right) \right] }{\eta K T} \!+\! \frac{\left( \eta^2 L^2 K \!+\! 2\eta L\right)  \left( 8 \sigma_l^2 \!+\! 32 K \sigma_g^2 \right) }{2}\\
		&+ \frac{\left( 64 \eta^3 L^3 K \!+ \!32 \eta^4 L^4 K^2\right) \left( 8 K \sigma_l^2 \!+\! 32 K^2 \sigma_g^2 \!+\! 32 K^2 G^2 \right)  }{m(1 - \lambda)}  \\
		& +\! \frac{ \!\left( \! K L \!+\! \sqrt{K} \right) \!G^2}{ (m-1)K T} \! \sum_{t=1}^{T} \! \sum_{j=1}^{m}\! \sum_{ i\neq j}^{m}\! ( \left(  (m \!-\! 1) a_{i,j} q_{i,j,t}\!-\! 1\right)  \beta_{j,t} \!+\! 1 ).\\
	\end{aligned}
\end{equation}

By taking $\eta =  \frac{\sqrt{m}}{64 LK \sqrt{T}}$, we have
	\begin{align}
		& \frac{1}{T}  \sum_{t=1}^{T} \b{E} \left[  \left\| \nabla F\left( \bar{\v{w}}_{t} \right) \right\|^2 \right]  \nonumber\\
		\leq & \frac{ 4 \b{E} \left[ F\!\left( \bar{\v{w}}_{1} \right) \!-\! F\!\left( \v{\bar w}_{T+1}\right) \right] }{\eta K T} \!+\! \frac{\left( 4 \eta^2 L^2 K \!+\! 8\eta L\right)  \left( 8 \sigma_l^2 \!+\! 32 K \sigma_g^2 \right) }{2} \nonumber\\
		&\!\!\!+  \frac{\left( 256 \eta^3 L^3 K \!+ \!128 \eta^4 L^4 K^2\right) \! \left( 8 K \sigma_l^2 \!+\! 32 K^2 \sigma_g^2 \!+\! 32 K^2 G^2 \right)  }{m(1 - \lambda)}  \nonumber\\
		&\!\!\! +\! \frac{4  \!\left( \! K L \!+\! \sqrt{K} \right) \!G^2}{ (m-1)KT} \! \sum_{t=1}^{T} \! \sum_{j=1}^{m}\! \sum_{ i\neq j}^{m}\! (  \left(  (m \!-\! 1) a_{i,j} q_{i,j,t}\!-\! 1\right)  \beta_{j,t} \!+\! 1 ) \nonumber\\
		\leq & \frac{ 256 L \left( F\left( \bar{\v{w}}_{1} \right) - F\left( \v{w}^{*}\right) \right) }{ \sqrt{mT}}+  \frac{\sqrt{m} \left( \sigma_l^2 +4 K \sigma_g^2 \right) }{2 K T^{\frac{1}{2}}} \nonumber\\
		& \!\!\!+ \frac{m\left(  \sigma_l^2 + 4 K \sigma_g^2 \right) }{256 K T}  + \frac{ \sqrt{m}\left( \sigma_l^2 + 4 K \sigma_g^2 + 4 K G^2 \right) }{128( 1 - \lambda)K T^{\frac{3}{2}}} \nonumber\\
		&\!\!\! +  \frac{m\left(  \sigma_l^2 + 4 K \sigma_g^2 + 4 K G^2 \right) }{128^2( 1 - \lambda)KT^2}\nonumber\\
		& \!\!\! +\! \frac{4  \!\left( \! K L \!+\! \sqrt{K} \right) \!G^2}{ (m-1)KT} \! \sum_{t=1}^{T} \! \sum_{j=1}^{m}\! \sum_{ i\neq j}^{m}\! (  \left(  (m \!-\! 1) a_{i,j} q_{i,j,t}\!-\! 1\right)  \beta_{j,t} \!+\! 1 ). \nonumber\\
	\end{align}
This completes the proof.

\subsection{Proof of Theorem \ref{thm_convergence}}

To facilitate analysis, we first define the global value function $V^{\pi}_t(\v{s}_t)$ of state $\v{s}_t$ at time slot $t$ under policy $\pi$. Then, the global value functions under the optimal centralized policy $\pi^*$ and the localized policy $\pi_{\textup{loc}}$ are respectively given by
\begin{equation}
	\begin{aligned}
		V^{\pi^*}_t(\v{s}_t) = \b{E}_{\v{p}_t \sim \pi^*} \left[  Q_{t}(\v{s}_t,\v{p}_t) \right] ,
	\end{aligned}
\end{equation}
and
\begin{equation}
	\begin{aligned}
		V^{\pi_{\textup{loc}}}_t(\v{s}_t) = \sum_{i=1}^{m}   \b{E}_{\v{p}_{\hat{\ca{N}}_i,t} \sim \pi_{\textup{loc}}} \left[  Q_{i,t}^{R+1}(\v{s}_{\hat{\ca{N}}_i,t}, \v{p}_{\hat{\ca{N}}_i,t})  \right]      .
	\end{aligned}
\end{equation}
Then, we have
\begin{equation}
	\begin{aligned}
		J\left(\pi_{\textup{loc}}\right) - J\left(\pi^*\right)  \leq &  \sum_{t=1}^{T} \sup_{\v{s}_t}  \left[ V^{\pi_{\textup{loc}}}_t(\v{s}_t)  - V^{\pi^*}_t(\v{s}_t) \right] \\
		\leq &  \sum_{t=1}^{T}  \left\|  V^{\pi_{\textup{loc}}}_t  - V^{\pi^*}_t \right\|_{\infty}  .
	\end{aligned}
\end{equation}
Furthermore, we define the set of localized Q-value functions generated by Algorithm \ref{D_algorithm} as $\v{Q}_{t}^{r} = \left\lbrace Q_{i,t}^{r} \right\rbrace $, and we introduce the interaction matrix $Z^{\v{Q}_{t}^{r}}\in \b{R}^{m \times m}$ of $\v{Q}_{t}^{r}$, with its $(i,j)$-th entry being derived as follows:
\begin{equation}
	\begin{aligned}
		Z^{\v{Q}_{t}^{r}}_{ij} = \max_{s_j,p_j, s_j^\prime, p_j^\prime} \max_{\v{s}_{\hat{\ca{N}}_i\backslash i}, \v{p}_{\hat{\ca{N}}_i\backslash i}} \bigg| Q_{i,t}^{r}\left(s_j \cup \v{s}_{\hat{\ca{N}}_i\backslash i}, p_j \cup \v{p}_{\hat{\ca{N}}_i\backslash i}\right)~~~&  \\
		 - Q_{i,t}^{r}\left(s_j^\prime \cup \v{s}_{\hat{\ca{N}}_i\backslash i}, p_j^\prime \cup \v{p}_{\hat{\ca{N}}_i\backslash i}\right)   \bigg|.&  \\
	\end{aligned}
\end{equation}
Then, according to the definition of the localized cost function in \eqref{thelocalizedcostfunction}, we have that each entry $Z^{\v{Q}_{t}^{r}}_{ij}$ of the interaction matrix $Z^{\v{Q}_{t}^{r}}$ is upper bounded by $ \frac{4 \left(  K L + \sqrt{K} \right)  G^2}{K} $.
Further, according to Lemma 7 in \cite{zhang2023global}, we have
	\begin{align}\label{proof_2_2}
		&J\left(\pi_{\textup{loc}}\right) - J\left(\pi^*\right) \nonumber\\
		\leq & \sum_{t=1}^{T}  \left\|  V^{\pi_{\textup{loc}}}_t  - V^{\pi^*}_t \right\|_{\infty} \nonumber\\
		\leq & \sum_{t=1}^{T} \sum_{i=1}^{m} \sum_{j=1}^{m} \sup_{\v{s}} \textup{TV} \left( \pi_{i,t}^{R+1}\left(  p_{i} \big|\v{s}_{\hat{\ca{N}}_i}\right) , \pi_{i,t}^{*}\left(  p_{i} |\v{s}\right) \right) Z^{\v{Q}_{t}^{R+1}}_{ij} \nonumber\\
		\leq & \frac{4 m  \left(  K L + \sqrt{K} \right)  G^2}{K} \nonumber\\
		& \times \sum_{t=1}^{T}  \sum_{i=1}^{m}  \sup_{\v{s}} \textup{TV} \left( \pi_{i,t}^{R+1}\left(  p_{i} \big|\v{s}_{\hat{\ca{N}}_i}\right) , \pi_{i,t}^{*}\left(  p_{i} |\v{s}\right)  \right) ,
	\end{align}
where \(\textup{TV}(\mu,\nu) = \sup_{A }|\mu(A) - \nu(A)|  \) denotes the total variation distance between two probability measures $\mu$ and $\nu$.

To proceed, we further introduce the following lemma to bound the total variation distance between the localized policy $\pi_{i,t}^{r}$ and the optimal policy $\pi_{i,t}^{*}$.
\begin{lem}\label{lemma1} (\cite[Lemma 17]{zhang2023global})
	For two arbitrary distributions $d_1,d_2$ on $\left\lbrace 1,2,\ldots,z\right\rbrace $, suppose for all $i,j \in [z]$,
	\begin{equation}
		\begin{aligned}
			\left| \log\frac{d_1(i)}{d_1(j)} - \log\frac{d_2(j)}{d_2(j)} \right| \leq \epsilon,
		\end{aligned}
	\end{equation}
	\begin{equation}
		\begin{aligned}
			\max\left\lbrace \left| \log\frac{d_1(i)}{d_1(j)} \right| ,  \left| \log\frac{d_2(i)}{d_2(j)} \right|\right\rbrace  \leq \epsilon^\prime.
		\end{aligned}
	\end{equation}
	Then, we have
	\begin{equation}
		\begin{aligned}
			\textup{TV}(d_1,d_2) \leq \frac{1}{2}z^2 e^{\epsilon^\prime}(e^{\epsilon} - 1).
		\end{aligned}
	\end{equation}
\end{lem}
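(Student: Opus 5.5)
We read the first hypothesis in the intended symmetric form $\bigl|\log\frac{d_1(i)}{d_1(j)}-\log\frac{d_2(i)}{d_2(j)}\bigr|\le\epsilon$ for all $i,j\in[z]$; the displayed $\log\frac{d_2(j)}{d_2(j)}$ is a typo. We also take both distributions to have full support, as the finiteness of the log-ratios requires.

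The plan is a direct elementary argument that compares the two distributions coordinate by coordinate through their normalizing sums. It will in fact give the sharper bound $\textup{TV}(d_1,d_2)\le \tfrac12(e^{\epsilon}-1)$. The stated bound then follows immediately, because $z\ge 1$ and $\epsilon'\ge 0$ give $z^2e^{\epsilon'}\ge 1$. So the hypothesis involving $\epsilon'$ is only needed to make the ratios finite.

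\emph{Step 1 (normalization identity).} For each fixed $i$, write $\frac{1}{d_k(i)}=\sum_{j=1}^{z}\frac{d_k(j)}{d_k(i)}=\sum_{j}e^{x_k(i,j)}$, where $x_k(i,j)=\log\frac{d_k(j)}{d_k(i)}$ for $k=1,2$. The hypothesis gives $|x_1(i,j)-x_2(i,j)|\le\epsilon$ for every pair $(i,j)$.

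\emph{Step 2 (scalar inequality).} For reals $a,b$ with $|a-b|\le\epsilon$ we have $|e^a-e^b|=e^b|e^{a-b}-1|\le e^b(e^{\epsilon}-1)$. This uses $1-e^{-\epsilon}\le e^{\epsilon}-1$. Summing over $j$ with $a=x_1(i,j)$ and $b=x_2(i,j)$ yields
\begin{equation*}
\Bigl|\frac{1}{d_1(i)}-\frac{1}{d_2(i)}\Bigr|\le (e^{\epsilon}-1)\sum_j e^{x_2(i,j)}=\frac{e^{\epsilon}-1}{d_2(i)}.
\end{equation*}

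\emph{Step 3 (back to probabilities).} Multiply both sides by $d_1(i)d_2(i)$ to get $|d_1(i)-d_2(i)|\le (e^{\epsilon}-1)\,d_1(i)$. Summing over $i$ and using $\textup{TV}(d_1,d_2)=\frac12\sum_i|d_1(i)-d_2(i)|$, which holds on a finite set, gives $\textup{TV}(d_1,d_2)\le\frac12(e^{\epsilon}-1)$. Hence $\textup{TV}(d_1,d_2)\le \frac12 z^2e^{\epsilon'}(e^{\epsilon}-1)$.

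There is no real obstacle. The only delicate point is Step 2: the bound $|e^{a-b}-1|\le e^{\epsilon}-1$ must be checked for both signs of $a-b$, and the factor $e^{b}$ must stay attached to the $d_2$-ratios so that the sum collapses exactly to $1/d_2(i)$. If one instead bounds each term crudely, using $e^{b}\le e^{\epsilon'}$ and $d_1(i)\le 1$ and then summing $z\times z$ terms, one recovers precisely the looser constant $\frac12 z^2e^{\epsilon'}$ of the statement. That is an acceptable fallback route.
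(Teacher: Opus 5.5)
Your proof is correct. Your reading of the first hypothesis is forced, not just convenient. Taken literally, $\log\frac{d_2(j)}{d_2(j)}=0$, so for $d_1$ uniform one may take $\epsilon=0$. The conclusion would then give $d_1=d_2$ for every full-support $d_2$, which is false.

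The paper does not prove this lemma; it imports it from \cite[Lemma 17]{zhang2023global}. The constant $\frac12 z^2e^{\epsilon'}$ there is exactly what your termwise fallback route produces. Write $d_k(i)=1/S_k(i)$ with $S_k(i)=\sum_j d_k(j)/d_k(i)\ge 1$ (the $j=i$ term equals $1$), so that $|d_1(i)-d_2(i)|\le|S_1(i)-S_2(i)|$. Then bound each of the $z$ summand differences by $e^{\epsilon'}(e^{\epsilon}-1)$ and sum over $i$.

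Your argument uses the same identity but bounds $|S_1(i)-S_2(i)|\le(e^{\epsilon}-1)S_2(i)$ exactly. One denominator then cancels, and the leftover $1/S_1(i)=d_1(i)$ sums to $1$ over $i$. This gives $\textup{TV}(d_1,d_2)\le\frac12(e^{\epsilon}-1)$, independent of $z$ and of $\epsilon'$; the second hypothesis only serves to guarantee full support.

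The cited route buys the paper brevity, since its contraction argument only needs some bound linear in $e^{\epsilon}-1$. Your route buys a self-contained proof and a dimension-free constant. The sharpening matters within the paper's own chain of inequalities. In \eqref{proof_2_9} the lemma is used with $z=|\mathcal{P}|$ and $\epsilon'=4\gamma(L+1)G^2$, and your bound would replace the prefactor $\frac12 m|\mathcal{P}|^2e^{4\gamma(L+1)G^2}$ by $\frac12 m$. That removes $|\mathcal{P}|^2$ from both the admissible range of $\gamma$ and the contraction factor $D$ in Theorem~\ref{thm_convergence}.
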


According to the policy improvement rule \eqref{pi_improve} in Algorithm \ref{D_algorithm}, we have
	\begin{align}\label{proof_2_7}
		& \log \left( \frac{{\pi}_{i,t}^{r+1}\left(  p_{i} \big|\v{s}_{\hat{\ca{N}}_i}\right) }{{\pi}_{i,t}^{r+1}\left(  p_{i}^\prime \big|\v{s}_{\hat{\ca{N}}_i}\right) }  \right) \nonumber\\
		= &\gamma \b{E}_{\! p_j \sim \pi_{j,t}^{r}\!\left(\!\!\left[ \v{s}_{\hat{\ca{N}}_i}\right]_{\hat{\ca{N}}_j}\!\!\right), j \in \hat{\ca{N}}_i\! \backslash \! i}\!\biggl[\! Q_{i,t}^{r+1}(\!\v{s}_{\hat{\ca{N}}_i}, \v{p}^\prime_{\hat{\ca{N}}_i}) \! -\! Q_{i,t}^{r+1}(\!\v{s}_{\hat{\ca{N}}_i}, \v{p}_{\hat{\ca{N}}_i}) \!\biggr] \nonumber\\
		\aleq &4  \gamma (L + 1) G^2,
	\end{align}
where $ \v{p}_{\hat{\ca{N}}_i}  = p_i \cup \left\lbrace p_j \right\rbrace_{j \in \hat{\ca{N}}_i \backslash  i} $, $ \v{p}^\prime_{\hat{\ca{N}}_i}  = p_i^\prime \cup \left\lbrace p_j \right\rbrace_{j \in \hat{\ca{N}}_i \backslash  i} $, and (a) holds since $Q_{i,t}^{r}$ is upper bounded by $4 (L + 1) G^2$.
Similarly, for two different localized policies $\left\lbrace {\pi}^{r+1}_{i,t,1} \right\rbrace_{i=1}^m $ and $\left\lbrace {\pi}^{r+1}_{i,t,2} \right\rbrace_{i=1}^m$, we have
\begin{equation}
	\begin{aligned}\label{proof_2_8}
		& \left| \log \left( \frac{{\pi}^{r+1}_{i,t,1}(p_i\big| \v{s}_{\hat{\ca{N}}_i})}{{\pi}^{r+1}_{i,t,1}(p_i^\prime\big| \v{s}_{\hat{\ca{N}}_i})}  \right)  - \log \left( \frac{{\pi}^{r+1}_{i,t,2}(p_i\big| \v{s}_{\hat{\ca{N}}_i})}{{\pi}^{r+1}_{i,t,2}(p_i^\prime\big| \v{s}_{\hat{\ca{N}}_i})}  \right) \right| \\
		= & \Biggl| \hspace{-1pt}\gamma \b{E}_{\! p_j \sim \pi_{j,t,1}^{r}\!\left(\!\!\left[ \v{s}_{\hat{\ca{N}}_i}\right]_{\hat{\ca{N}}_j}\!\!\right)\!, j \in \hat{\ca{N}}_i\! \backslash \! i}\!\biggl[\! Q_{i,t}^{r+1}(\!\v{s}_{\hat{\ca{N}}_i}, \v{p}^\prime_{\hat{\ca{N}}_i}\hspace{-1pt}) \! -\! Q_{i,t}^{r+1}(\!\v{s}_{\hat{\ca{N}}_i}, \v{p}_{\hat{\ca{N}}_i}\hspace{-1pt}) \!\biggr] \\
		- &\hspace{-1pt}\gamma \b{E}_{\! p_j \sim \pi_{j,t,2}^{r}\!\left(\!\!\left[ \v{s}_{\hat{\ca{N}}_i}\right]_{\hat{\ca{N}}_j}\!\!\right)\!, j \in \hat{\ca{N}}_i\! \backslash \! i}\!\biggl[\! Q_{i,t}^{r+1}(\!\v{s}_{\hat{\ca{N}}_i}, \v{p}^\prime_{\hat{\ca{N}}_i}\hspace{-1pt}) \! -\! Q_{i,t}^{r+1}(\!\v{s}_{\hat{\ca{N}}_i}, \v{p}_{\hat{\ca{N}}_i}\hspace{-1pt}) \!\biggr]  \! \Biggr|\\
		\aleq &2  \gamma \sum_{j=1}^{m} \textup{TV}\left( {\pi}^{r}_{j,t,1}(\cdot| \v{s}_{\hat{\ca{N}}_j}) , {\pi}^{r}_{j,t,2}(\cdot| \v{s}_{\hat{\ca{N}}_j}) \right) Z^{\v{Q}_{t}^{r+1}}_{ij} \\
		\leq &16  \gamma (L+1) G^2 \sum_{j=1}^{m} \sup_{\v{s}_{\hat{\ca{N}}_j}} \textup{TV}\left( {\pi}^{r}_{j,t,1}(\cdot| \v{s}_{\hat{\ca{N}}_j}) , {\pi}^{r}_{j,t,2}(\cdot| \v{s}_{\hat{\ca{N}}_j}) \right),
	\end{aligned}
\end{equation}
where (a) follows from Lemma 5 in \cite{qu2020scalable} and we omit it for brevity.

Then, applying Lemma \ref{lemma1} and \eqref{proof_2_7}, \eqref{proof_2_8} and taking $\gamma \leq \frac{1}{32 m (L+1) G^2   | \ca{P} |^2} $, we can obtain that
	\begin{align}\label{proof_2_9}
		& \sum_{i=1}^{m} \sup_{\v{s}} \textup{TV} \left( \pi_{i,t}^{r+1}\left(  p_{i} \big|\v{s}_{\hat{\ca{N}}_i}\right) , \pi_{i,t}^{*}\left(  p_{i} |\v{s}\right)  \right) \nonumber \\
		\leq & \frac{1}{2} m | \ca{P} |^2 e^{4  \gamma (L+1) G^2} \nonumber  \\
		& \times \! \left(e^{16  \gamma (L+1) G^2 \sum_{i=1}^{m} \sup_{\v{s}} \textup{TV} \left( \pi_{i,t}^{r}\left(  p_{i} |\v{s}_{\hat{\ca{N}}_i}\right) , \pi_{i,t}^{*}\left(  p_{i} |\v{s}\right)  \right) } - 1 \right) \nonumber \\
		\aleq &16  \gamma m  (L+1) G^2  | \ca{P} |^2 \left( 8  \gamma (L+1) G^2 + 1  \right) \nonumber \\
		& \times   \sum_{i=1}^{m}   \sup_{\v{s}} \textup{TV} \left( \pi_{i,t}^{r}\left(  p_{i} \big|\v{s}_{\hat{\ca{N}}_i}\right) , \pi_{i,t}^{*}\left(  p_{i} |\v{s}\right)  \right) , 
	\end{align}
where (a) holds due to the fact that $e^x \leq 1 + 2x$ for $x\leq \frac{1}{2}$. Applying \eqref{proof_2_9} for $R$ times, we have
\begin{equation}
	\begin{aligned}\label{proof_2_10}
		& \sum_{i=1}^{m} \sup_{\v{s}} \textup{TV} \left( \pi_{i,t}^{R+1}\left(  p_{i} \big|\v{s}_{\hat{\ca{N}}_i}\right) , \pi_{i,t}^{*}\left(  p_{i} |\v{s}\right)  \right) \\
		\aleq & \left( 16\gamma m (L+1) G^2   | \ca{P} |^2 \left( 8 \gamma (L+1) G^2 + 1  \right) \right)^R  \\
		& \times   \sum_{i=1}^{m}   \sup_{\v{s}} \textup{TV} \left( \pi_{i,t}^{1}\left(  p_{i} \big|\v{s}_{\hat{\ca{N}}_i}\right) , \pi_{i,t}^{*}\left(  p_{i} |\v{s}\right)  \right) .\\
	\end{aligned}
\end{equation}
Plugging \eqref{proof_2_10} into \eqref{proof_2_2} and taking $\gamma \leq \frac{1}{32 m (L+1) G^2   | \ca{P} |^2} $, we have
\begin{equation}
	\begin{aligned}\label{proof_2_11}
		&J\left(\pi_{\textup{loc}}\right) - J\left(\pi^*\right) \\
		\leq & \frac{4 m \left(  K L + \sqrt{K} \right)  G^2 \left( 32 \gamma m (L+1) G^2   | \ca{P} |^2 \right)^R}{K }  \\
		& \times \sum_{t=1}^{T}  \sum_{i=1}^{m} \sup_{\v{s}} \textup{TV} \left( \pi_{i,t}^{1}\left(  p_{i} \big|\v{s}_{\hat{\ca{N}}_i}\right) , \pi_{i,t}^{*}\left(  p_{i} |\v{s}\right)  \right) .
	\end{aligned}
\end{equation}
This completes the proof.

\endgroup

\bibliography{IEEEabrv,PA_DMTL}
\bibliographystyle{ieeetr}

\end{document}